\documentclass[11pt]{article}

\usepackage[imports,opt,theoremstyles]{commands}	
\usepackage{xcolor}

\newcommand{\opnorm}[1]{\norm{#1}_{\operatorname{S}_\infty}}
\newcommand{\nucnorm}[1]{\norm{#1}_{\operatorname{S}_1}}
\newcommand{\frobnorm}[1]{\norm{#1}_{\operatorname{F}}}
\newcommand{\dnorm}[1]{\norm{#1}_{\Ut, *}}

\newcommand{\nameeqref}[1]{\hyperref[#1]{(\nameref{#1})}}
\newcommand{\ourAlg}{\textsc{AngularMuown}}

\newcommand{\muown}{\textsc{Muown}}
\newcommand{\Mc}{\mathcal{M}}
\newcommand{\Tc}{\mathcal{T}}
\newcommand{\Tan}[1]{\Tc_{#1}\Mc}
\newcommand{\Abf}{\mathbf{A}}
\newcommand{\Dbf}{\mathbf{D}}
\newcommand{\Ebf}{\mathbf{E}}
\newcommand{\Gbf}{\mathbf{G}}
\newcommand{\Mbf}{\mathbf{M}}
\newcommand{\Obf}{\mathbf{O}}

\newcommand{\Rbf}{\mathbf{R}}
\newcommand{\Sbf}{\mathbf{S}}
\newcommand{\Ubf}{\mathbf{U}}
\newcommand{\Vbf}{\mathbf{V}}
\newcommand{\Wbf}{\mathbf{W}}
\newcommand{\Zbf}{\mathbf{Z}}

\genCommandsName{E}{Vector}{\Ebf}
\genCommandsName{G}{Vector}{\Gbf}
\genCommandsName{M}{Vector}{\Mbf}
\genCommandsName{O}{Vector}{\Obf}
\genCommandsName{R}{Vector}{\Rbf}
\genCommandsName{S}{Vector}{\Sbf}
\genCommandsName{U}{Vector}{\Ubf}
\genCommandsName{W}{Vector}{\Wbf}
\genCommandsName{Z}{Vector}{\Zbf}

\DeclareMathOperator{\DiagOp}{Diag}
\DeclareMathOperator{\diagOp}{diag}
\DeclareMathOperator{\ProjOp}{Proj}
\DeclareMathOperator{\RetrOp}{Retr}
\DeclareMathOperator{\GradOp}{grad}

\DeclareMathOperator{\AdamOp}{Adam}
\newcommand{\Diag}[1]{\DiagOp\!\pare{#1}}
\newcommand{\diag}[1]{\diagOp\!\pare{#1}}
\newcommand{\Proj}[2]{\ProjOp_{#1}\!\pare{#2}}
\newcommand{\Retr}[2]{\RetrOp_{#1}\!\pare{#2}}
\newcommand{\rownorm}[1]{\norm{#1}_{\mathrm{row}}}
\newcommand{\obman}{\mathcal{M}_{\mathrm{ob}}}
\newcommand{\tang}[1]{\mathrm{T}_{#1}\obman}
\newcommand{\onev}{\mathbf{1}}
\newcommand{\zerob}{\mathbf{0}}
\newcommand{\sphere}[1]{\mathbb{S}^{#1}}
\providecommand{\inner}[2]{\left\langle #1, #2 \right\rangle}

\definecolor{hlgreen}{RGB}{173,209,53}   

\newcommand{\hlg}[1]{\colorbox{hlgreen}{#1}}

\newcommand{\equalcontrib}{\textsuperscript{\textbf{*}}}
\newcommand{\equalcontribfootnote}{%
  \begingroup
  \renewcommand{\thefootnote}{\fnsymbol{footnote}}%
  \footnotetext[1]{Equal contribution.}%
  \endgroup
}					
\usepackage[margin=2.5cm]{geometry}					
\usepackage{wrapfig}								
\usepackage{booktabs}

\usepackage{tabularx}                               
\usepackage{makecell}
\usepackage{multirow}
\usepackage{graphicx}								
\usepackage{subcaption}								
\usepackage{algorithm}								
\usepackage{algpseudocode}
\usepackage{tikz}                                   
\usetikzlibrary{arrows.meta,calc}

\usepackage[sorting=ynt,
			natbib=true,
			style=authoryear-comp,
			maxcitenames=2,
            maxbibnames=99,
			uniquelist=false,
			uniquename=false,
			backend=biber]{biblatex}
\title{\vspace{-5mm}Muown Implicitly Performs Angular Step-size Decay}
\author{%
\begin{tabular}{@{}c@{\hspace{3em}}c@{}}
    \begin{tabular}[t]{c}
    \textbf{Florian Hübler}\equalcontrib\\
    Department of Computer Science\\
    ETH Zurich, Switzerland\\
    \texttt{florian.huebler@inf.ethz.ch}
    \end{tabular}
    &
    \begin{tabular}[t]{c}
    \textbf{Kai Lion}\equalcontrib\\
    Department of Computer Science\\
    ETH Zurich, Switzerland\\
    \texttt{kai.lion@inf.ethz.ch}
    \end{tabular}
    \\[20mm]
    \begin{tabular}[t]{c}
    \textbf{Antonio Orvieto}\\
    ELLIS Institute Tübingen, MPI-IS\\
    Tübingen AI Center, Germany\\
    \texttt{antonio@tue.ellis.eu}
    \end{tabular}
    &
    \begin{tabular}[t]{c}
    \textbf{Niao He}\\
    Department of Computer Science\\
    ETH Zurich, Switzerland\\
    \texttt{niao.he@inf.ethz.ch}
    \end{tabular}
\end{tabular}%
}
\date{}

\begin{document}

\maketitle
\equalcontribfootnote

\begin{abstract}
    Matrix-aware optimizers such as \textsc{Muon} and \textsc{Muown} have recently shown strong empirical performance for pre-training Transformers. In particular, \textsc{Muown} separates each weight matrix into row magnitudes and an un-normalized direction variable, updating the former with \textsc{Adam} and the latter with \textsc{Muon}. We show that the directional update of \textsc{Muown} is equivalent to a Riemannian step on the normalized directions, while the magnitude of the un-normalized parameterization only modulates the angular step size. This explains the step-size stability of \textsc{Muown} and suggests making the angular step size explicit. The resulting method, \ourAlg, optimizes directly over the normalized directions and uses a schedulable angular multiplier decoupled from the radial magnitude update. \ourAlg\ improves over \textsc{Muown} and, at the time of writing, a preliminary version is leading the per-optimizer category of the modded nanoGPT speedrunning competition. Further experiments on Qwen2-0.5B, and 1.1B parameter mixture-of-experts models confirm the algorithm scales beyond small models. An implementation of the algorithm is available at \url{https://github.com/fhueb/angular-muown}.
\end{abstract}

\section{Introduction}
\label{sec:intro}
\begin{wrapfigure}{r}{0.5\textwidth}
	\vspace{-\baselineskip}
	\centering
	\includegraphics[width=\linewidth]{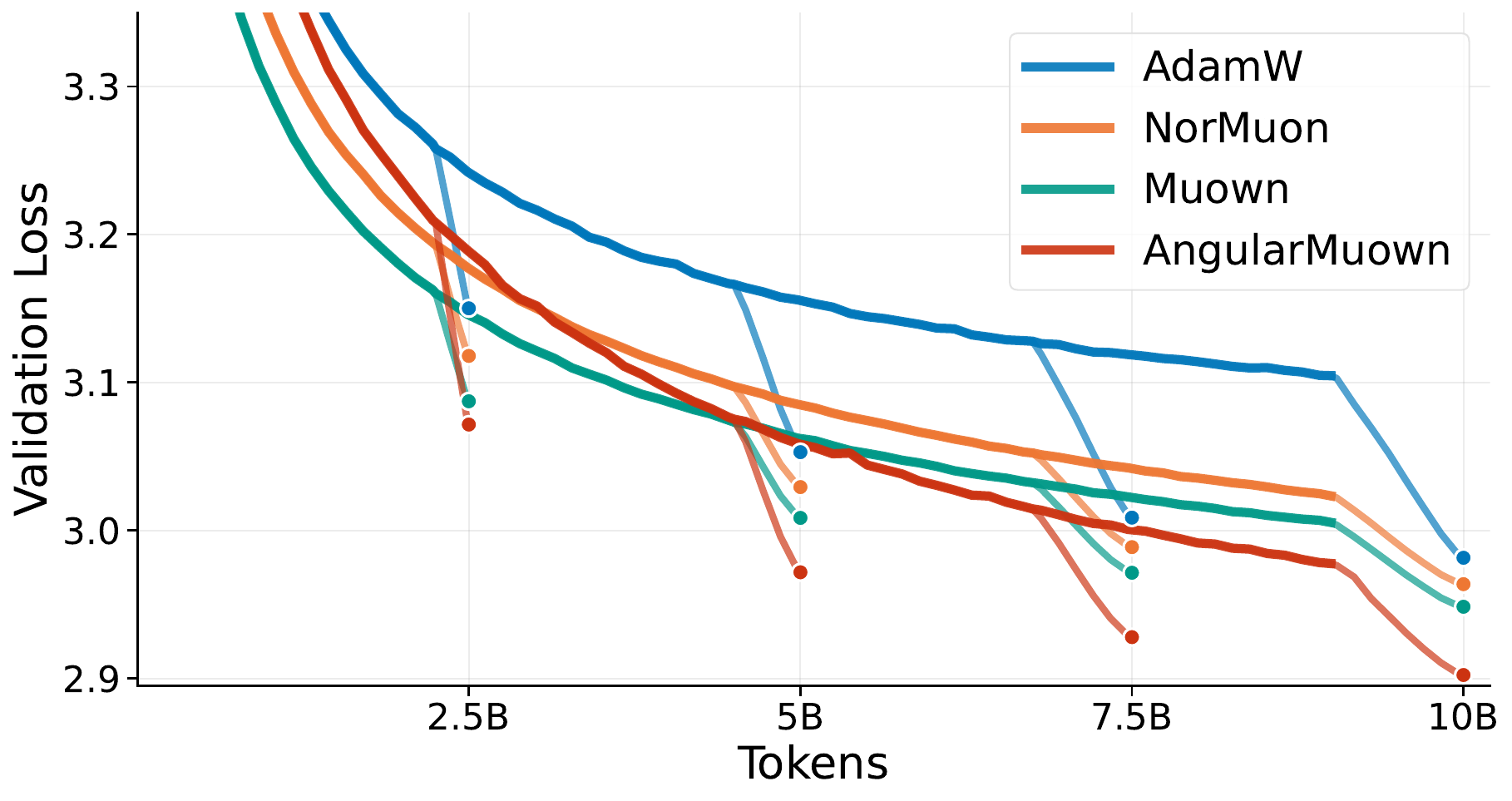}
	\vspace*{-7mm}
	\caption{Validation loss of a 124M Transformer checkpoint-branching pre-trained on FineWeb-Edu. \ourAlg\ substantially improves over tuned \textsc{Muown}, NorMuon, and \textsc{AdamW} across the token budget.}
	\label{fig:teaser}
	\vspace{-1\baselineskip}
\end{wrapfigure}

For over a decade, \textsc{Adam} \citep{kingma_adam_2015} has been the default optimizer for deep learning. Recently, a family of \emph{matrix-aware} optimizers that exploit the two-dimensional structure of weight matrices has gained traction, particularly for Transformers whose parameters are dominated by dense matrices. \textsc{Muon}, one of the simplest and most effective variants, performs normalized steepest-descent updates in the spectral norm \citep{jordan2024muon}. This realizes the per-step size prescribed by the spectral condition for stable feature learning \citep{yang_spectral_2024}, but it only bounds the change $\opnorm{\Delta\Wbf}$ and leaves the growth of $\opnorm{\Wbf}$ along the trajectory uncontrolled.

\textsc{Muown} \citep{lion_muown_2026} addresses this shortcoming by re-parameterizing each weight into its row-magnitudes and directions, i.e., $\Wbf = \Diag{g/\rownorm{\Rbf}}\Rbf$, where $\rownorm{\Abf} \in \R^m$ denotes the vector of Euclidean row norms of $\Abf \in \R^{m \times n}$. This corresponds to an optimizer-internal weight normalization \citep{salimans_weight_2016}, which separates a per-row magnitude $g$ from a direction $\Rbf$. The magnitudes are trained with \textsc{Adam} and the direction with \textsc{Muon}. Beyond improving performance, this simple change makes training remarkably robust to large stepsizes that would otherwise cause \textsc{Muon} to diverge.

We argue that this robustness is not incidental but geometric. The directional update on $\Rbf$, conducted in the ambient space $\R^{m\times n}$, can equivalently be written as a Riemannian update on the row-oblique manifold $\mathcal M = \set{\Abf \in \R^{m \times n} \suchthat \norm{\Abf}_{\mathrm{row}} = \mathbf{1}}$. In this view, the directional update step becomes a per-row \emph{angular} displacement whose size scales inversely with the row norm. As the row norms grow during training, this angular step is silently annealed, which explains \textsc{Muown}'s stepsize stability. It is also exactly the role weight decay plays implicitly elsewhere, modulating the angular step size by controlling the weight norm \citep{laarhoven_l2_2017, kosson_rotational_2024}.

\begin{wrapfigure}{r}{0.51\textwidth}
	\vspace{-\baselineskip}
	\begin{minipage}{0.51\textwidth}
	\begin{algorithm}[H]
		\caption{Simplified \ourAlg}
		\label{alg:simplified_ourAlg}
		{\linespread{1.3}\selectfont
		\small
		\begin{algorithmic}
			\Require Gradient $\Gbf = \nabla_\Wbf \mathcal L(\Wbf)$, momentum buffer $\Mbf$, learning rate $\eta$, momentum $\beta_1, \beta_2$, shape scale $s_{m, n}$, \hlg{angular step multiplier $\kappa_t$} \vspace{2mm}
			\State $\Wbf = \Diag{g} \Ubf$ \Comment{Reparameterize internally}
            \State $\operatorname{grad}_g \gets \diag{\Gbf \Ubf^\top}$
			\State $\operatorname{grad}_\Ubf \gets \Diag{g}\pare{ \Gbf - \Diag{\operatorname{grad}_g}\Ubf }$\vspace{2mm}
			\State $\Mbf \gets \beta_1 \Mbf + \operatorname{grad}_\Ubf$ \Comment{Update $\Ubf$ with \textsc{Muon}}
			\State $\Obf \gets \mathrm{Orth}\pare{\operatorname{grad}_\Ubf + \beta_1 \Mbf }$ 
			\State $\Ubf \gets \hlg{$\operatorname{RowNormalize}$}\pare{\Ubf - \eta \hlg{$\kappa_t$} s_{m, n} \Obf}$ \vspace{2mm} 
			\State $g \gets \AdamOp(g, \eta, \beta_1, \beta_2)$ \Comment{Update $g$ with \textsc{Adam}} \vspace{2mm}
			\State $\Wbf \gets \Diag{g}\,\Ubf$ \Comment{Re-compose}
		\end{algorithmic}
		}
	\end{algorithm}
	\end{minipage}
	\vspace{-1.5\baselineskip}
\end{wrapfigure}
Making this implicit structure explicit yields \ourAlg. Instead of updating the direction $\Rbf$ in the ambient space, we explicitly keep it normalized and perform Riemannian \textsc{Muon} updates on $\Ubf = \Diag{\rownorm{\Rbf}}^{-1} \Rbf$. In particular, the directional stepsize now governs the angle of the directional update and can be scheduled explicitly.
The method thus preserves the spectral update that makes \textsc{Muon} effective while exposing the angular step size as an explicit design choice. A preliminary version of \ourAlg\ performed strongly in the per-optimizer category of the modded nanoGPT speedrun, suggesting that the Riemannian interpretation is not only explanatory but also practically useful.\footnote{As of June 16th 2026, this preliminary version is the leading per-optimizer result, see \textsc{Muown-RowNormControl} in Figure 2: \url{https://github.com/KellerJordan/modded-nanogpt/tree/23f758f123df4b8dd5b5fe64c4c9070f6ef33b52/records/track_3_optimization\#notable-results-history}.}

\paragraph{Prior public disclosure.} The preliminary version of \ourAlg\ was publicly disclosed in our May 8, 2026 modded-nanoGPT speedrun \href{https://github.com/KellerJordan/modded-nanogpt/pull/288}{submission}. That submission used \muown’s internal decomposition into a per-row gain $g$ and stored direction $v$, and introduced a scheduled stored-direction norm, explicitly described there as \emph{angular step-size attenuation}: increasing $\lVert v\rVert$ while recomposing through $v/\lVert v\rVert$ reduces the effective angular displacement on the unit sphere. The present work formalizes this mechanism as implicit Riemannian angular step-size decay and replaces the stored-norm gauge schedule by an explicit angular multiplier $\kappa_t$.

\paragraph{Contributions.} 
In this work we identify a key implicit property of \textsc{Muown} which potentially explains its success, make it explicit and provide strong empirical support for the resulting algorithm \ourAlg.

\begin{itemize}
    \item We show that \textsc{Muown} implicitly performs Riemannian updates on the direction, and that its stepsize stability arises from an implicit annealing of the angular step size by row-norm growth of the direction. Building on this view, we introduce \ourAlg, which turns the angular step size into an explicit, schedulable quantity that is independent of the radial update (\Cref{sec:motivation_and_prelims,sec:riemannian_muown}).
    \item We conduct extensive experiments on transformer-based architectures that show a consistent improvement of \ourAlg\ compared to \textsc{AdamW}, (\text{Nor}-)\textsc{Muon}, and \textsc{Muown}. In particular, we observe a significant speed-up over all baselines across dense 124M and 500M, as well as 1.1B mixture-of-experts models (Section~\ref{sec:experiments}).
    \item We provide theoretical support for the directional update. In particular, in the non-convex setting, we show that an $\eps$-stationary point is reached in at most $\Oc\pare{\min\set{m, n} \Dz L \sigma^2 \eps^{-4}}$ gradient evaluations under reasonable assumptions (Section~\ref{sec:angular_muown.conv_guarantee}).
\end{itemize}

\section{Motivation}
\label{sec:motivation_and_prelims}
\definecolor{muongreen}{RGB}{0,150,105}
\definecolor{muonpink}{RGB}{200,90,150}
\definecolor{muonblue}{RGB}{0,105,180}
\definecolor{softgray}{RGB}{90,90,90}

\begin{wrapfigure}{r}{0.3\textwidth}
    \centering
    \vspace{-\baselineskip}
    \begin{tikzpicture}[
        x=1cm,y=1cm,
        >=Latex,
        line cap=round,
        line join=round,
        bluevec/.style={draw=blue!70!black, very thick, -{Latex[length=2.2mm]}},
        grevec/.style={draw=green!45!black, very thick, -{Latex[length=2.2mm]}},
        magray/.style={draw=magenta!65!black, dashed, thick},
        magarc/.style={draw=magenta!65!black, thick},
        labb/.style={text=blue!70!black, font=\small},
        labg/.style={text=green!45!black, font=\small},
        labm/.style={text=magenta!65!black, font=\small}
    ]
    
        \def\phi{50}
        \def\rs{1.5}
        \def\rl{3.5}
        \def\a{1}
    
        \coordinate (O)   at (0,0);
        \coordinate (Rs)  at ({\rs*cos(\phi)},{\rs*sin(\phi)});
        \coordinate (Rl)  at ({\rl*cos(\phi)},{\rl*sin(\phi)});
        \coordinate (Rsp) at ({\rs*cos(\phi)-\a*sin(\phi)},{\rs*sin(\phi)+1.2*\a*cos(\phi)});
        \coordinate (Rlp) at ({\rl*cos(\phi)-\a*sin(\phi)},{\rl*sin(\phi)+1.2*\a*cos(\phi)});
    
        \pgfmathsetmacro{\phis}{atan2(\rs*sin(\phi)+\a*cos(\phi),\rs*cos(\phi)-\a*sin(\phi))}
        \pgfmathsetmacro{\phil}{atan2(\rl*sin(\phi)+\a*cos(\phi),\rl*cos(\phi)-\a*sin(\phi))}
    
        \draw[black, very thick]
            ({\rs*cos(18)},{\rs*sin(18)}) arc[start angle=18,end angle=116,radius=\rs];
        \draw[black, very thick]
            ({\rl*cos(16)},{\rl*sin(16)}) arc[start angle=16,end angle=104,radius=\rl];
    
        \draw[bluevec] (O) -- (Rs);
        \draw[bluevec] (O) -- (Rl);
    
        \draw[grevec] (Rs) -- (Rsp);
        \draw[grevec] (Rl) -- (Rlp);
    
        \draw[magray] (O) -- (Rsp);
        \draw[magray] (O) -- (Rlp);
    
        \def\anglesmallrad{1.05}
        \def\anglelargerad{2.75}
        
        \coordinate (ThetaSmallEnd) at (\phis:\anglesmallrad);
        \coordinate (ThetaLargeEnd) at (\phil:\anglelargerad);
        
        \draw[magarc](\phi:\anglesmallrad)
            arc[start angle=\phi,end angle=\phis,radius=\anglesmallrad];
        
        \draw[magarc](\phi:\anglelargerad)
            arc[start angle=\phi,end angle=1.01*\phil,radius=\anglelargerad];
        
        \node[labm, anchor=east, xshift=-2pt] at ($(ThetaSmallEnd)+(0.15,0.05)$) {$\theta_{\mathrm{small}}$};
        \node[labm, anchor=east, xshift=-2pt] at ($(ThetaLargeEnd)+(0.1,0.1)$) {$\theta_{\mathrm{large}}$};
    
        \fill[blue!70!black] (Rs)  circle (1.5pt);
        \fill[blue!70!black] (Rl)  circle (1.5pt);
        \fill[green!45!black] (Rsp) circle (1.5pt);
        \fill[green!45!black] (Rlp) circle (1.5pt);
    
        \node[labb, anchor=west] at ($(Rs)+(0.05,0.02)$) {$R_{\mathrm{small}}$};
        \node[labb, anchor=west] at ($(Rl)+(0.03,0.02)$) {$R_{\mathrm{large}}$};
        \node[labg, anchor=west] at ($(Rlp)+(0.28,-0.2)$) {$\Delta R=\eta \Obf$};
    
    \end{tikzpicture}
    \caption{Impact of weight norm on angular update size.}
    \label{fig:angular_due_to_magnitude}
\end{wrapfigure}

For normalized or scale-invariant weights, the direction alone determines the feature the row (or neuron) encodes. Given this observation, the meaningful effective step size is the per-step \emph{angular} change of that row.  While the row-norm does not impact the feature representation, it does impact optimization dynamics by modulating how strongly a fixed Euclidean update rotates that direction. A large norm makes the same update induce a smaller angular change, and a small norm induces a larger one. Thus, the norm acts as an implicit controller of the angular step size.
This perspective explains part of the effect of weight decay in normalized networks. Rather than acting only as classical regularization, weight decay counteracts norm growth and thereby prevents angular updates from collapsing, yielding an equilibrium of angular updates \citep{laarhoven_l2_2017,NormMatters2018Hoffer,kosson_rotational_2024}. 

\muown\ inherits this mechanism in a more direct form. Its internal parameterization
\begin{equation*}
    \Wbf = \Diag{g/\rownorm{\Rbf}}\Rbf
      = \Diag g \Ubf,
\end{equation*}
with $ \Ubf = \Diag{\rownorm{\Rbf}}^{-1}\Rbf$, represents the row magnitude as $g$ whereas $\rownorm{\Rbf}$ is a gauge variable. Changing it does not change $\Wbf$, but it does change the optimization dynamics. Since \muown\ updates $\Rbf$ in the ambient space, the same additive update to $\Rbf$ induces a smaller rotation when $\rownorm{\Rbf}$ is large, as illustrated in Figure~\ref{fig:angular_due_to_magnitude}. Mathematically, the internal directional update $\Rtp \gets \Rt - \eta \Obf$ can \emph{equivalently} be written as $\widetilde{\Ubf}_{\tp} \gets \Ut - \eta \Diag{\rownorm{\Rt}}^{-1} \Obf$, making this dependence explicit.

This provides a simple explanation for \muown's stepsize stability: growth of the unnormalized direction norm silently anneals the angular update. More importantly, it reveals that \muown's parameterization already exposes the angular step size as the natural quantity to control, opening up a new opportunity: rather than letting the angular schedule emerge implicitly from the growth of the row norm, we can schedule it explicitly. \ourAlg\ realizes this by optimizing directly over the row-normalized directions and replacing \muown's implicit norm-induced annealing with an explicit angular multiplier, making weight decay redundant in the process.

\section{\texorpdfstring{\ourAlg}{AngularMuown}}
\label{sec:riemannian_muown}
In this section, we discuss the motivation and different parts of \ourAlg{} in more detail. We first make the geometry that is implicit in \textsc{Muown} explicit: its directional update is a Riemannian steepest-descent step on the oblique manifold (\Cref{subsec:oblique,subsec:implicit}), which turns the Euclidean \textsc{Muon} step into a per-row \emph{angular} displacement modulated by the row scale (\Cref{subsec:angular}). Promoting this modulation to an explicit schedule yields \ourAlg\ (\Cref{subsec:riemuown}).

\subsection{The Oblique-Manifold Parameterization}
\label{subsec:oblique}

\textsc{Muown} reparameterizes each weight as $\Wbf = \Diag{g / \rownorm{\Rbf}}\,\Rbf$ \citep{salimans_weight_2016}, separating a per-row scale $g \in \R^m$ from a direction matrix $\Rbf \in \R^{m \times n}$. Since only the row-normalized $\Diag{1/\rownorm{\Rbf}}\Rbf$ enters the weight, the directional degrees of freedom are exactly the matrices with unit rows, i.e. the \emph{oblique manifold} $\obman \coloneqq \{ \Ubf \in \R^{m \times n} : \rownorm{\Ubf} = \onev_m \} = (\sphere{n-1})^m$, the product of $m$ unit spheres embedded in $\R^{m\times n}$ \citep{absil_optimization_2008, boumal_introduction_2023}. We therefore write the parameterization directly on the manifold,
\begin{equation}
	\label{eq:reparam}
	\Wbf(g, \Ubf) = \Diag{g}\, \Ubf,
	\qquad \Ubf \in \obman,\quad g \in \R^m ,
\end{equation}
which matches \textsc{Muown} under $\Ubf = \Diag{1/\rownorm{\Rbf}}\Rbf$ and identifies $g$ with the row magnitudes $\rownorm{\Wbf}$. The tangent space at $\Ubf$ collects matrices with rows orthogonal to those of $\Ubf$, $\tang{\Ubf} = \{ \Vbf : \diag{\Vbf \Ubf^\top} = \zerob \}$, where $\diag{\cdot}$ extracts the diagonal. Under the Frobenius metric, the orthogonal projection onto $\tang{\Ubf}$ removes the per-row radial component, and we return to $\obman$ by the row-normalization retraction:
\begin{equation}
	\label{eq:proj-retr}
	\Proj{\Ubf}{\Abf} \coloneqq \Abf - \Diag{\diag{\Abf \Ubf^\top}}\,\Ubf ,
	\qquad
	\Retr{\Ubf}{\Vbf} \coloneqq \Diag{ 1 / \rownorm{\Ubf + \Vbf} }\,(\Ubf + \Vbf) .
\end{equation}
The projection subtracts $\inner{\Abf_i}{\Ubf_i}\Ubf_i$ from each row $\Abf_i$, and the retraction is exactly the row renormalization \textsc{Muown} performs when recomputing the effective weight \citep{absil_optimization_2008}.

\subsection{Muown Implicitly Performs Riemannian Optimization}
\label{subsec:implicit}

Differentiating through \eqref{eq:reparam} splits the gradient $\Gbf = \nabla_\Wbf \mathcal L(\Wbf)$ into a radial and a directional part. The chain rule gives the radial gradient $\nabla_g \mathcal L = \diag{\Gbf \Ubf^\top} \in \R^m$, with $(\nabla_g \mathcal L)_i = \inner{\Gbf_i}{\Ubf_i}$, and the Euclidean directional gradient $\Diag{g}\Gbf$. Projecting the latter onto $\tang{\Ubf}$ yields the \emph{Riemannian gradient} of $\Ubf \mapsto \mathcal L(\Diag{g}\Ubf)$,
\begin{equation}
	\label{eq:riem-grad}
	\operatorname{grad}_\Ubf \mathcal L
	= \Proj{\Ubf}{\Diag{g}\Gbf}
	= \Diag{g}\Proj{\Ubf}{\Gbf}
	= \Diag{g}\pare{ \Gbf - \Diag{\nabla_g \mathcal L}\,\Ubf },
\end{equation}
the second equality holding because $\Diag{g}$ rescales rows and the projection \eqref{eq:proj-retr} acts row-wise. These are exactly the decoupled gradients \textsc{Muown} forms internally: the radial part drives the row scales, the tangent part the directions.

\textsc{Muown} updates the directions by feeding \eqref{eq:riem-grad} through the \textsc{Muon} pipeline (momentum, Nesterov, spectral orthogonalization) and renormalizing. Spectral orthogonalization $\Obf = \argmin_{\opnorm{\Obf} \le 1} \inner{\Vbf}{\Obf}$, computed by Newton--Schulz, returns the normalized spectral steepest-descent direction \citep{ConvexOptimization2004boyd, bernstein_old_2024}. Composing it with the retraction \eqref{eq:proj-retr} shows the \textsc{Muown} direction update is a retracted, spectrally-conditioned Riemannian (quasi-)gradient step on $\obman$,
\begin{equation*}
	\Utp = \Retr{\Ut}{\eta_t\, s_{m,n}\, \Obf_t },
	\qquad
	\Obf_t = \argmin_{\opnorm{\Obf} \leq 1}\!\langle \operatorname{grad}_{\Ut}\mathcal L + \beta \Mt , \Obf \rangle,
\end{equation*}
with the radius $g$ optimized in parallel by \textsc{Adam}, matching the $\ell_\infty$ geometry singled out for the row scales in \citep{lion_muown_2026}, and $s_{m,n}$ the shape scale of \Cref{subsec:riemuown}. Two details depart from a textbook Riemannian method: the momentum $\Mt$ is accumulated in the ambient space rather than parallel-transported, and $\Obf_t$ is retracted directly without re-projecting onto $\tang{\Ut}$, with the retraction absorbing any residual radial component. \textsc{Muown}, though derived from a purely row-magnitude argument, thus already \emph{is} a Riemannian optimizer on $\obman$.

\subsection{From Euclidean Steps to Angular Step Sizes}
\label{subsec:angular}

Section~\ref{sec:motivation_and_prelims} singled out the per-row \emph{angle} as the effective step size of a direction update. The oblique-manifold view makes this angle intrinsic: each update in \eqref{eq:riem-grad} rotates a unit row $\Ubf_i$ along its sphere, so its proper step size is the swept angle, not the Euclidean displacement $\norm{\xi_i}_2$. The following elementary fact converts the Euclidean step the optimizer computes into that angle.

\begin{table}[t]
	\centering
	\caption{Three regimes for the per-row angular step size $\theta_{t,i}$, all sharing the Euclidean tangential step $a_{t,i} \coloneqq \eta_t\, s_{m,n}\, \norm{\Obf_{t,i}^\perp}_2$ with $\Obf_{t,i}^\perp = \Proj{\Ubf_{t,i}}{\Obf_{t,i}}$. Weight decay and \textsc{Muown} both modulate the angle through a norm $\nu_{t,i}$ in the \emph{denominator}, and differ only in that norm's dynamics: decay pins it at a \emph{rotational equilibrium} \citep{kosson_rotational_2024, laarhoven_l2_2017}, whereas \textsc{Muown}'s unregularized stored norm \emph{grows}, annealing the angle as a side effect. \ourAlg\ fixes $\nu_{t,i}\equiv1$ and moves the schedule into the \emph{numerator}, decoupling it from any norm.}
	\label{tab:regimes}
	\begin{tabular}{@{}llll@{}}
		\toprule
		Regime & $\tan\theta_{t,i}$ & Governing norm $\nu_{t,i}$ & Late-training angle \\
		\midrule
		Weight decay (\textsc{Muon}, $\lambda>0$) & $a_{t,i}/\nu_{t,i}$ & $\norm{\Wbf_{t,i}}_2 \to \nu_\infty$ (equilibrium) & plateau, $\theta_{t,i}\to\theta_\infty>0$ \\
		\addlinespace
		Implicit (\textsc{Muown}) & $a_{t,i}/\nu_{t,i}$ & $r_{t,i}\sim\sqrt{t}$ (unregularized growth) & anneals, $\theta_{t,i}\sim t^{-1/2}$ \\
		\addlinespace
		Explicit (\ourAlg) & $\kappa_t\, a_{t,i}$ & $\nu_{t,i}\equiv 1$ (decoupled) & scheduled, $\theta_{t,i}\sim\kappa_t$ \\
		\bottomrule
	\end{tabular}
\end{table}

\begin{proposition}[Angular step size]
	\label{prop:angular-step}
	Let $\Ubf_i \in \sphere{n-1}$ be a unit row and $\xi_i \in \R^n$ a proposed step, split into radial and tangential parts $\xi_i = \inner{\xi_i}{\Ubf_i}\Ubf_i + \xi_i^\perp$ with $\xi_i^\perp = \Proj{\Ubf_i}{\xi_i}$. The angle $\theta_i$ between $\Ubf_i$ and the retracted row $\Retr{\Ubf_i}{\xi_i}$ satisfies
	\begin{equation}
		\label{eq:angular-step}
		\tan \theta_i = \frac{\norm{\xi_i^\perp}_2}{1 + \inner{\xi_i}{\Ubf_i}} .
	\end{equation}
	Only the tangential part turns the row---the radial part merely stretches it and is erased by the renormalization---so for a purely tangential step ($\inner{\xi_i}{\Ubf_i} = 0$) the rotation reduces to $\tan\theta_i = \norm{\xi_i}_2$.
\end{proposition}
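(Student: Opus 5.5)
The plan is to compute the angle directly from the unnormalized vector $\Ubf_i + \xi_i$, since the retraction only rescales it by a positive factor and so does not change its direction. The angle $\theta_i$ between $\Ubf_i$ and $\Retr{\Ubf_i}{\xi_i}$ is therefore the angle between $\Ubf_i$ and $v \coloneqq \Ubf_i + \xi_i$. This requires $v \neq 0$ so that the retraction is defined.

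Substituting the decomposition from the statement gives
\[
v = \pare{1 + \inner{\xi_i}{\Ubf_i}}\Ubf_i + \xi_i^\perp,
\]
with $\xi_i^\perp \perp \Ubf_i$ because $\Proj{\Ubf_i}{\cdot}$ removes the radial component (cf.\ \eqref{eq:proj-retr} applied to a single row). Since $\norm{\Ubf_i}_2 = 1$, the two components of $v$ are read off directly:
\begin{itemize}
\item the component along $\Ubf_i$ is $\inner{v}{\Ubf_i} = 1 + \inner{\xi_i}{\Ubf_i}$;
\item the component orthogonal to $\Ubf_i$ has norm $\norm{\xi_i^\perp}_2$.
\end{itemize}

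In the plane spanned by $\Ubf_i$ and $\xi_i^\perp$ (or on the line through $\Ubf_i$, if $\xi_i^\perp = 0$), $v$ has coordinates $\pare{1+\inner{\xi_i}{\Ubf_i},\ \norm{\xi_i^\perp}_2}$ in the orthonormal frame $\set{\Ubf_i, \xi_i^\perp/\norm{\xi_i^\perp}_2}$. Hence $\cos\theta_i = (1+\inner{\xi_i}{\Ubf_i})/\norm{v}_2$ and $\sin\theta_i = \norm{\xi_i^\perp}_2/\norm{v}_2$. Taking the ratio gives \eqref{eq:angular-step}.

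For the special case, a purely tangential step has $\inner{\xi_i}{\Ubf_i} = 0$, so $\xi_i^\perp = \xi_i$ and the formula reduces to $\tan\theta_i = \norm{\xi_i}_2$. The remark that the radial part "merely stretches" the row is just the observation that it only changes the coefficient along $\Ubf_i$, which enters the formula through the denominator.

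The only real subtlety is the domain of validity. The $\tan$ formula with $\theta_i \in [0,\pi)$ is exactly right when $1+\inner{\xi_i}{\Ubf_i} > 0$, which always holds in the small-step regime of interest (e.g.\ $\norm{\xi_i}_2 < 1$). If $1+\inner{\xi_i}{\Ubf_i} < 0$, the angle is obtuse and the formula must be read with $\theta_i \in (\pi/2, \pi)$. If it equals $0$, then $\theta_i = \pi/2$ and the right-hand side is $+\infty$, which is consistent. I will state this sign condition explicitly, note the excluded degenerate case $v = 0$, and otherwise treat the computation as routine.
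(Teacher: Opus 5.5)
Your proof is correct and takes the same route as the paper's. You use that the retraction only rescales $\Ubf_i+\xi_i$ by a positive factor, split that vector orthogonally as $(1+\inner{\xi_i}{\Ubf_i})\Ubf_i + \xi_i^\perp$, and read off $\tan\theta_i$ as the ratio of the perpendicular to the parallel component. Your explicit $\cos/\sin$ computation and your caveats on $\Ubf_i+\xi_i\neq 0$ and on the sign of $1+\inner{\xi_i}{\Ubf_i}$ make precise what the paper's right-triangle argument leaves implicit.
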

\begin{proof}
	The retraction rescales $\Ubf_i + \xi_i$ to unit length without changing its direction, so $\theta_i$ is the angle between $\Ubf_i$ and $\Ubf_i + \xi_i$. Since $\norm{\Ubf_i}_2 = 1$, this vector splits orthogonally as $\Ubf_i + \xi_i = (1 + \inner{\xi_i}{\Ubf_i})\,\Ubf_i + \xi_i^\perp$: its component along $\Ubf_i$ (length $1 + \inner{\xi_i}{\Ubf_i}$) and its perpendicular component (length $\norm{\xi_i^\perp}_2$) are the adjacent and opposite sides of the right triangle subtending $\theta_i$, and their ratio is $\tan\theta_i$.
\end{proof}

The proposition lets us read off the per-step rotation in each parameterization. For the small steps taken in practice the radial denominator $1 + \inner{\xi_i}{\Ubf_i} \approx 1$, so $\tan\theta_i \approx \norm{\xi_i^\perp}_2$ and only the tangential part contributes, mirroring the projection in \eqref{eq:riem-grad}. \ourAlg\ and \textsc{Muown} use the \emph{same} Euclidean update $-\eta_t s_{m,n}\Obf_{t,i}$, but feed a different step $\xi_i$ into the retraction. \ourAlg\ retracts the unit row directly, so $\xi_i = -\eta_t \kappa_t s_{m,n}\Obf_{t,i}$. \textsc{Muown} keeps the \emph{unnormalized} direction and adds the update to it, $\Rbf_{t+1,i} = \Rbf_{t,i} - \eta_t s_{m,n}\Obf_{t,i}$, with unit row $\Ubf_{t,i} = \Rbf_{t,i}/r_{t,i}$ and $r_{t,i} = \norm{\Rbf_{t,i}}_2$. Factoring out the current norm,
\begin{equation*}
	\Rbf_{t+1,i} = r_{t,i}\pare{ \Ubf_{t,i} - \tfrac{\eta_t s_{m,n}}{r_{t,i}}\,\Obf_{t,i} } ,
\end{equation*}
and as normalization discards the positive prefactor $r_{t,i}$, the next unit row $\Ubf_{t+1,i} = \Rbf_{t+1,i}/\norm{\Rbf_{t+1,i}}_2$ is exactly $\Retr{\Ubf_{t,i}}{\xi_i}$ with the \emph{rescaled} step $\xi_i = -(\eta_t s_{m,n}/r_{t,i})\,\Obf_{t,i}$: the larger the stored norm, the smaller the step the unit row actually sees. Substituting each $\xi_i$ into \Cref{prop:angular-step} gives
\begin{equation}
	\label{eq:angular-contrast}
	\underbrace{\tan \theta_{t,i} = \eta_t\, \kappa_t\, s_{m,n}\,\norm{\Obf_{t,i}^\perp}_2}_{\ourAlg}
	\qquad\text{versus}\qquad
	\underbrace{\tan \theta_{t,i} = \eta_t\, s_{m,n}\,\norm{\Obf_{t,i}^\perp}_2 \big/ r_{t,i}}_{\textsc{Muown}} .
\end{equation}
where $\Obf_{t,i}^\perp = \Proj{\Ubf_{t,i}}{\Obf_{t,i}}$. \ourAlg's rotation is fixed by the multiplier $\kappa_t$ and is independent of the row scale, whereas \textsc{Muown}'s carries the extra factor $1/r_{t,i}$ inherited from the \emph{growing} stored norm. As the row norms drift upward under \textsc{Muon}, the mechanism \textsc{Muown} exposes, the diagonal multiplier $1/r_{t,i}$ acts as a per-row angular schedule that decays as a side effect of the row-scale dynamics. \ourAlg{} replaces this implicit, scale-coupled decay with an explicit, independently chosen schedule.

\subsection{The AngularMuown Update}
\label{subsec:riemuown}

\subsubsection{Angular Learning-Rate Multiplier}
\Cref{subsec:angular} suggests an obvious degree of freedom: rather than letting the angular schedule emerge as the side effect $1/r_{t,i}$ of row-norm growth, we prescribe it. We keep $\Ubf \in \obman$ at unit row norm and scale the directional step by an \emph{angular learning-rate multiplier} $\kappa_t \in (0, 1]$, recovering \textsc{Muown}'s implicit decay when $\kappa_t \propto 1/r_{t,i}$ while opening the design space to schedules chosen on their own merit. Motivated by theory, we use an inverse-polynomial,
\begin{equation*}
	\kappa_t^{\mathrm{poly}} = \pare{1 + c\,(t - t_w)_+}^{-p},
\end{equation*}
held at $1$ for $t \le t_w$ warm-up steps. Here $c > 0$ and $p > 0$ are hyperparameters setting the decay rate and shape, but we observe that $c = 0.001, p = 1$ work uniformly. In particular, $\kappa_t^{\mathrm{poly}}$ decays without reference to a fixed endpoint and is therefore compatible with warmup-stable-decay, horizonless training \citep{hu_minicpm_2024}. Crucially $\kappa_t$ acts \emph{only} on the directional step, while the row scales $g$ keep the base rate $\eta_t$ through \textsc{Adam}, so the radial and angular dynamics are scheduled independently. The shape scale $s_{m,n}$ calibrates each directional step on the manifold.

\subsubsection{Shape Scaling for Dimension Independence}
\label{sec:angular_muown.angular_muown.scale_factor}
The remaining factor is the shape scale $s_{m,n}$, which calibrates each directional step on the manifold. We use the spectral-condition choice $s_{m,n} = \sqrt{\max(1, m/n)}$ by default such that the angular update is dimension-independent. By \eqref{eq:angular-contrast}, the per-row angle obeys $\tan\theta_{t,i} = \eta_t\,\kappa_t\,s_{m,n}\,\norm{\Obf_{t,i}^\perp}_2$, so the choice of $s_{m,n}$ sets how a fixed multiplier $\kappa_t$ translates into an actual rotation across layers of different shapes. The dimension dependence enters through $\norm{\Obf_{t,i}^\perp}_2$ and the orthogonalized update $\Obf_t$ has unit singular values, such that $\norm{\Obf_t}_F^2 = \min(m,n)$. Consequently, the energy of a typical row is roughly $\norm{\Obf_{t,i}}_2 \approx \sqrt{\min(m,n)/m} = 1/\sqrt{\max(1, m/n)}$. The spectral-condition choice $s_{m,n} = \sqrt{\max(1, m/n)}$ \citep{yang_spectral_2024} is exactly the reciprocal of this factor, so $s_{m,n}\,\norm{\Obf_{t,i}^\perp}_2 = \Theta(1)$ regardless of $m$ and $n$, rendering the angular update \emph{dimension-independent}: a single schedule $\kappa_t$ then induces the same angular step size on every row regardless of its shape. By contrast, the \textsc{Adam}-RMS-matching choice $s_{m,n} \propto \sqrt{\max(m,n)}$ \citep{liu_muon_2025} leaves a residual $s_{m,n}\,\norm{\Obf_{t,i}^\perp}_2 = \Theta(\sqrt{n})$ that grows with the fan-in, so the same $\kappa_t$ would produce systematically larger rotations on wider rows. In particular this choice of shape scaling gives our angular stepsize $\sst \kappa_t$ a physical meaning: by Proposition~\ref{prop:angular-step} with $\langle \xi_i, \Ubf_i \rangle \approx 0$ the direction is updated by $\tan \theta_i = \sst \kappa_t s_{m,n} \norm{\Obf_{t,i}^\perp}_2 \approx \sst \kappa_t$. Since, for small stepsize, we have $\arctan(x) \approx x + O(x^3)$ this corresponds to a directional update of $\approx {\frac{180}{\pi} \sst \kappa_t}^\circ \approx {57 \sst \kappa_t}^\circ$ degrees.

\Cref{alg:simplified_ourAlg} collects these ingredients. What \ourAlg{} adds over \textsc{Muown} is control. The angular and radial geometries that \textsc{Muown} entangles through the single row norm $r_{t,i}$ become explicit and independently schedulable, which we show in \Cref{sec:experiments} improves perplexity. Moreover, our shape scaling improves hyperparameter transferability over \textsc{Muown}'s by removing implicit shape dependence.

\subsection{Convergence Guarantee}
\label{sec:angular_muown.conv_guarantee}

Finally we provide a convergence guarantee for the directional update of an idealized version of \ourAlg. The idealized version keeps the orthogonalized update in the tangent space, see Algorithm~\ref{alg:idealised_rie_muown}. In the following we denote the Riemannian gradient of $\mathcal L \colon \Mc \to \R$ at $\Ubf \in \Mc$ as $\GradOp \mathcal L \pare{\Ubf}$. The formal assumptions can be found in Appendix~\ref{sec:app.missing_proofs}.

\begin{theorem}[Convergence Guarantee]\label{thm:conv}
    Let Assumptions \ref{assum:lb}, \ref{assum:smooth}, and \ref{assum:noise} hold, and denote $r \coloneqq \min\set{m, n}$. Then Algorithm~\ref{alg:idealised_rie_muown} with parameters
    \begin{equation*}
        \ssi = \sqrt{\frac{\Dz (1-\beta)}{L T}}, \qquad \beta = 1 - \min\set{1, \max\set{T^{-\nicefrac 2 3}, \sqrt{\frac{\Dz L}{r \sigma^2 T}}}}
    \end{equation*}
    satisfies
    \begin{equation*}
        \frac 1 T \iSum \Exp{\frobnorm{\GradOp \mathcal L (\Ut)}}
        \leq 4 \sqrt{\frac{\Dz L}{T}} + 7 \pare{\frac{r \Dz L \sigma^2}{T}}^{\nicefrac 1 4} + 4 \frac{\sqrt r \sigma}{T^{\nicefrac 1 3}}.
    \end{equation*}
\end{theorem}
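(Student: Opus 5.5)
Our plan is to adapt the standard analysis of normalized SGD with heavy-ball momentum (in the style of Cutkosky--Mehta) to the Riemannian, spectral-norm setting. We assume that the idealized Algorithm~\ref{alg:idealised_rie_muown} keeps a momentum $\Mt \in \tang{\Ut}$ (or its projection), forms $\Obf_t = \argmin_{\opnorm{\Obf}\le 1}\inner{\Mt}{\Obf}$ projected to the tangent space, and retracts $\Utp = \Retr{\Ut}{-\ssi \Obf_t}$. We also take Assumption~\ref{assum:smooth} to be a retraction-smoothness condition of the form
\[
\mathcal L(\Retr{\Ubf}{\Vbf}) \le \mathcal L(\Ubf) + \inner{\GradOp\mathcal L(\Ubf)}{\Vbf} + \tfrac L2 \frobnorm{\Vbf}^2 .
\]

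\textbf{Step 1: one-step descent inequality.} We apply the smoothness bound with $\Vbf = -\ssi\Obf_t$ and use $\frobnorm{\Obf_t}^2 \le r\,\opnorm{\Obf_t}^2\le r$. Writing $\Gbf_t = \GradOp\mathcal L(\Ut)$ and $\Ebf_t = \Mt - \Gbf_t$, duality of the spectral and nuclear norms gives
\[
\inner{\Gbf_t}{\Obf_t} \le -\nucnorm{\Gbf_t} + 2\nucnorm{\Ebf_t}.
\]
Combining with $\nucnorm{\cdot}\ge\frobnorm{\cdot}$ and $\nucnorm{\Ebf_t}\le\sqrt r\frobnorm{\Ebf_t}$, we get
\[
\ssi\frobnorm{\Gbf_t} \le \mathcal L(\Ut)-\mathcal L(\Utp) + 2\ssi\sqrt r\frobnorm{\Ebf_t} + \tfrac{L r}{2}\ssi^2 .
\]
If the paper's smoothness is stated with respect to the spectral geometry, the $r$ in the last term disappears. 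The stated rate $4\sqrt{\Dz L/T}$ suggests exactly this: the final bound contains no $r$ in that term. We therefore use whichever normalization Assumption~\ref{assum:smooth} provides, and track $r$ only through the error term. Telescoping with Assumption~\ref{assum:lb} bounds $\frac1T\sum_t\frobnorm{\Gbf_t}$ by
\[
\frac{\Dz}{\ssi T} + \frac{L\ssi}{2} + \frac{2\sqrt r}{T}\sum_t\frobnorm{\Ebf_t}.
\]

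\textbf{Step 2: momentum error recursion.} This is the main obstacle. From $\Mtp = \beta\Mt + (1-\beta)\widetilde\Gbf_{t+1}$ (or its projection), we get
\[
\Ebf_{t+1} = \beta\Ebf_t + \beta(\Gbf_t-\Gbf_{t+1}) + (1-\beta)(\widetilde\Gbf_{t+1}-\Gbf_{t+1}).
\]
The complication is that $\Gbf_t$ and $\Gbf_{t+1}$ live in different tangent spaces. We would handle this either through a Lipschitz-gradient-along-retraction property included in Assumption~\ref{assum:smooth}, giving $\frobnorm{\Gbf_t-\Gbf_{t+1}}\le L\ssi$ (using projection non-expansiveness, and the fact that projecting the ambient momentum onto the new tangent space only decreases its error), or by working with ambient Euclidean gradients and projecting at the end. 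Unrolling the recursion and using Assumption~\ref{assum:noise} (unbiased noise, variance $\sigma^2$, martingale increments) yields
\[
\Exp{\frobnorm{\Ebf_t}} \le \beta^t\sigma + \sqrt{1-\beta}\,\sigma + \frac{L\ssi}{1-\beta}.
\]
The initial term $\beta^t\sigma$ sums to $\sigma/(1-\beta)$. It contributes $\sqrt r\sigma/(T(1-\beta))$, which is where the $T^{-2/3}$ floor in $\beta$ and the term $\sqrt r\sigma T^{-1/3}$ come from.

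\textbf{Step 3: parameter tuning.} Plugging in,
\[
\frac1T\sum_t\Exp{\frobnorm{\Gbf_t}} \lesssim \frac{\Dz}{\ssi T} + \frac{L\ssi}{1-\beta} + \sqrt{r(1-\beta)}\,\sigma + \frac{\sqrt r\sigma}{T(1-\beta)} .
\]
We would verify that $\ssi=\sqrt{\Dz(1-\beta)/(LT)}$ balances the first two terms to $\sqrt{\Dz L/((1-\beta)T)}$. We then do a case split on which argument attains the max and min in $\beta$:
\begin{itemize}
\item If $1-\beta=\sqrt{\Dz L/(r\sigma^2T)}$, the first two terms and the variance term each become $(r\Dz L\sigma^2/T)^{1/4}$.
\item If $1-\beta=T^{-2/3}$ dominates, the initialization term gives $\sqrt r\sigma T^{-1/3}$, and the first terms are bounded by the $(\cdot)^{1/4}$ term since $T^{-2/3}\ge\sqrt{\Dz L/(r\sigma^2T)}$.
\item If $1-\beta=1$, we are in the low-noise regime with rate $\sqrt{\Dz L/T}$.
\end{itemize}
Collecting constants should give the stated $4,7,4$. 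We would expect the constant bookkeeping to be routine and the tangent-space mismatch in Step~2 to be the only genuinely delicate point.
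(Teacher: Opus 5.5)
Your architecture matches the paper's: a descent inequality, an unrolled momentum-error bound in the Cutkosky--Mehta style, then tuning $\eta,\beta$. The gap is in how you route the error through the Frobenius norm, which does not yield the stated bound. In Step~1 you replace $\nucnorm{\Ebf_t}$ by $\sqrt r\,\frobnorm{\Ebf_t}$ for the \emph{whole} error. In Step~2 you bound the drift part of $\frobnorm{\Ebf_t}$ by $L\eta/(1-\beta)$. So the drift enters the final bound as $2\sqrt r\,L\eta/(1-\beta)$, and Step~3 silently drops the $\sqrt r$. With $\eta=\sqrt{\Delta_0(1-\beta)/(LT)}$ and $1-\beta=\sqrt{\Delta_0 L/(r\sigma^2T)}$, this term equals $2\sqrt r\,(r\Delta_0L\sigma^2/T)^{1/4}$, a factor $\sqrt r$ worse than the theorem.

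The missing idea is to keep the deviation analysis in the nuclear norm and pay $\sqrt r$ only on the noise. Unroll $\Ebf_t$ in the ambient space (Algorithm~\ref{alg:idealised_rie_muown} accumulates $\Mbf_t$ without projecting it) into three parts: the initial error, the martingale sum $(1-\beta)\sum_\tau\beta^{t-\tau}\bigl(\widetilde\Gbf_\tau-\GradOp\mathcal L(\Ubf_\tau)\bigr)$, and the drift $\sum_\tau\beta^{t-\tau+1}\bigl(\GradOp\mathcal L(\Ubf_{\tau-1})-\GradOp\mathcal L(\Ubf_\tau)\bigr)$. Assumption~\ref{assum:smooth} is a \emph{nuclear-norm} Lipschitz condition along the retraction, $\nucnorm{\GradOp\mathcal L(\Ubf)-\GradOp\mathcal L(\Retr{\Ubf}{\Abf})}\le L\opnorm{\Abf}$. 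With $\opnorm{\Obf_t}\le 1$, the drift is therefore at most $\eta L/(1-\beta)$ in $\nucnorm{\cdot}$, with no $r$. Only the martingale sum is converted via $\nucnorm{\cdot}\le\sqrt r\frobnorm{\cdot}$, giving $\sqrt r\sigma\sqrt{1-\beta}$ in expectation. Your idea of projecting the momentum is unnecessary. It is also non-expansive only in Frobenius: the row-wise projector is not a contraction in $\opnorm{\cdot}$ or $\nucnorm{\cdot}$. That is exactly what forces you into the lossy Frobenius bookkeeping.

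Second, the duality step $\inner{\Gbf_t}{\Obf_t}\le-\nucnorm{\Gbf_t}+2\nucnorm{\Ebf_t}$ is not valid for this algorithm. $\Obf_t$ is the tangent-restricted minimizer $\argmin_{\Dbf\in\mathcal T_{\Ubf_t}\mathcal M,\ \opnorm{\Dbf}\le1}\inner{\Mbf_t}{\Dbf}$. Hence $\inner{\Mbf_t}{\Obf_t}=-\norm{\Mbf_t}_{\Ubf_t,*}$ with $\norm{\Abf}_{\Ubf,*}=\max_{\Vbf\in\mathcal T_{\Ubf}\mathcal M,\ \opnorm{\Vbf}\le1}\inner{\Abf}{\Vbf}$. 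This can be strictly smaller than $\nucnorm{\Mbf_t}$ even for tangent $\Mbf_t$, because its polar factor need not be tangent.

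The fix is to run the H\"older/triangle argument in this seminorm, giving $-\norm{\Gbf_t}_{\Ubf_t,*}+2\norm{\Ebf_t}_{\Ubf_t,*}$. Then use $\frobnorm{\Gbf_t}\le\norm{\Gbf_t}_{\Ubf_t,*}$ (test against $\Gbf_t/\opnorm{\Gbf_t}$, legitimate because $\Gbf_t$ is tangent) and $\norm{\Ebf_t}_{\Ubf_t,*}\le\nucnorm{\Ebf_t}$.

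Your guessed variant (ambient orthogonalization, then projection) does not rescue the nuclear-norm version, because row-wise projection can increase the spectral norm. For $\Obf$ the $2\times2$ identity and rows $\Ubf_1=(-\sin\frac\pi8,\cos\frac\pi8)$, $\Ubf_2=(\cos\frac\pi8,-\sin\frac\pi8)$, one gets $\opnorm{\Proj{\Ubf}{\Obf}}=(1+\sqrt2)/2$. This breaks both your claim that $r$ disappears from the quadratic term and the $\eta L$ drift bound. Minor point: with $\Obf_t$ defined as an argmin, the step is $\Retr{\Ubf_t}{\eta\Obf_t}$, not $\Retr{\Ubf_t}{-\eta\Obf_t}$.
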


\begin{figure}[t]
	\centering
	\begin{subfigure}{0.49\textwidth}
		\centering
		\includegraphics[width=\linewidth]{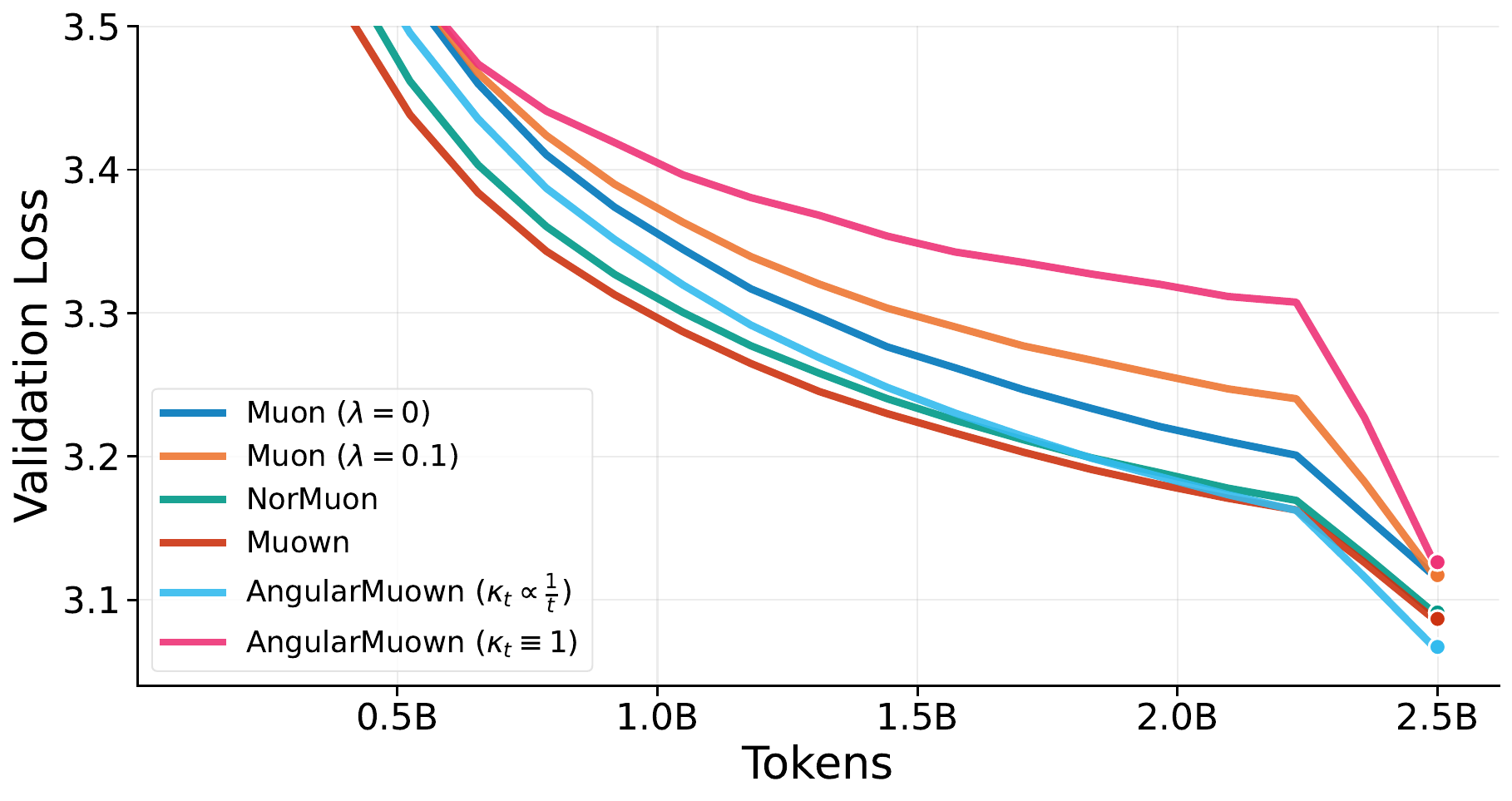}
		\caption{Loss curves.}
		\label{fig:124M-lr}
	\end{subfigure}
	\hfill
	\begin{subfigure}{0.49\textwidth}
		\centering
        \includegraphics[width=\linewidth]{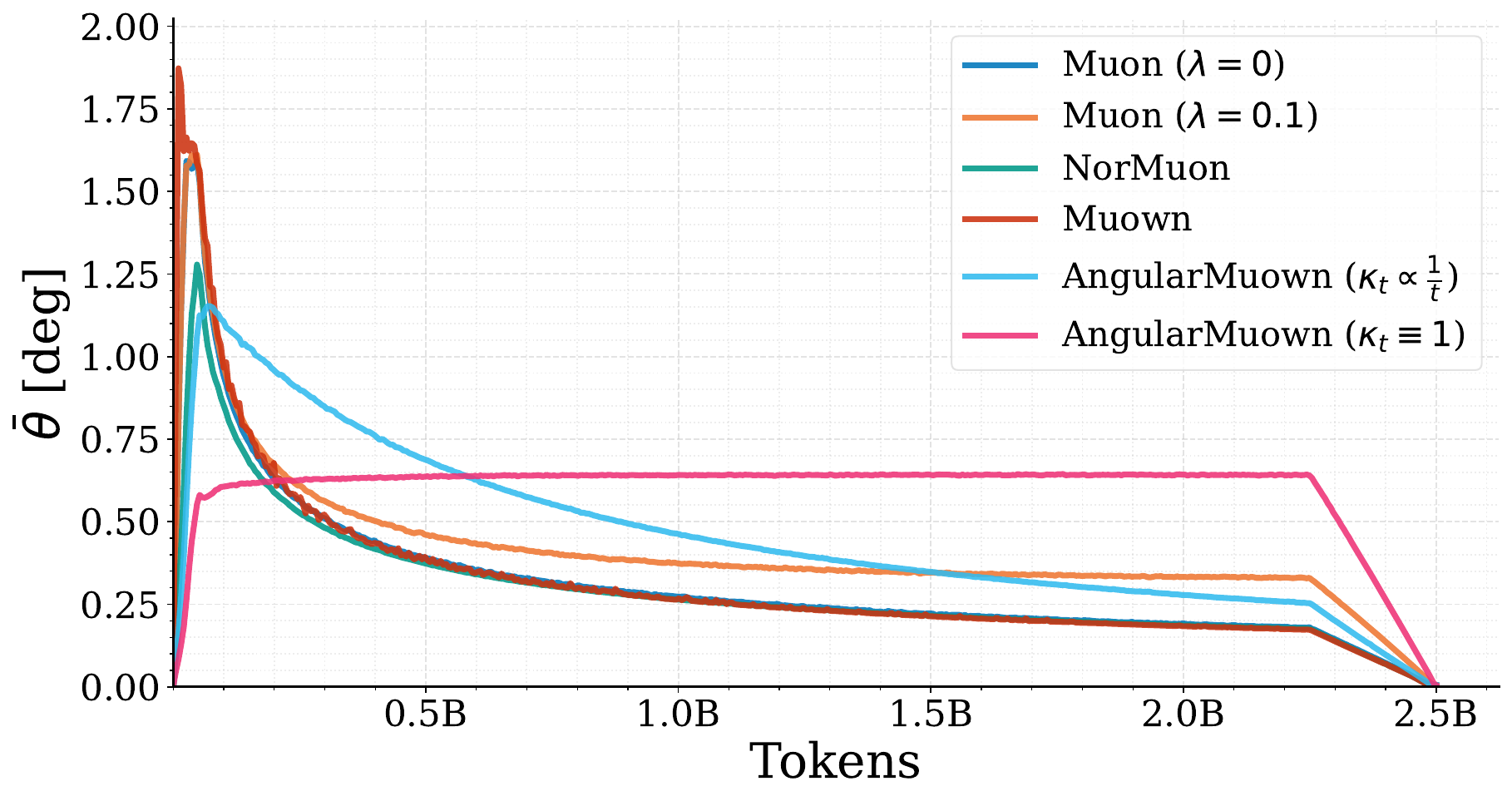}
		\caption{Angular displacement.}
		\label{fig:124M-angular}
	\end{subfigure}
	\caption{Analysis on the 124M model on 2.5B tokens. Weight decay is denoted by $\lambda$, \ourAlg's angular multiplier as $\kappa_t$. The reported runs correspond to the optimally tuned learning rate. \textbf{(a)} Validation loss curves across optimizers. \textbf{(b)} Mean angular displacement $\overline{\theta}$.}
	\label{fig:124M}
\end{figure}

The proof closely follows \muown's proof, but requires some care to handle the retraction and can be found in Appendix~\ref{sec:app.missing_proofs}.
The result implies a leading sample complexity of $\Oc \pare{\min\set{m, n}\Dz L \sigma^2 \eps^{-4}}$.
Surprisingly and contrary to previous observations, we observe that \ourAlg\ actually requires the polynomially decaying stepsize suggested by theory.
The proof also works for general retractions and exponential maps, however we stick to our specific choices for ease of exposition.

\section{Experiments}
\label{sec:experiments}
In this section, we evaluate \ourAlg\ on language-model pre-training with modern architectures on FineWeb-Edu. We conduct experiments to validate the empirical performance of \ourAlg\ across different transformer architectures and sizes. Furthermore we conduct an angular displacement comparison between algorithms to highlight its efficiency, and examine the aglorithms learning rate transfer.

\paragraph{Setup.} All experiments are conducted on nodes with 4 GH200 GPUs with a total of approximately {10'000} GPU hours, including preliminary experiments. Unless specified otherwise, we use PolarExpress \citep{PolarExpress2025amsel} for orthogonalization, the WSD schedule \citep{hu_minicpm_2024} with 100 warmup steps, a cooldown ratio of $0.1$, sequence length $1024$, and batch size $512$.

\begin{figure}[t]
	\centering

	\begin{subfigure}[t]{0.47\textwidth}
		\vspace{0pt}
		\centering
		\begin{minipage}[t][0.22\textheight][c]{\linewidth}
			\centering
			\setlength{\tabcolsep}{3.5pt}
			\renewcommand{\arraystretch}{0.95}
			\begin{tabular}{@{}lccc@{}}
            	\toprule
            	\multirow{2}{*}{$\eta$}
            	& \multicolumn{2}{c}{\ourAlg{}}
            	& \multirow{2}{*}{\textsc{NorMuon}} \\
            	\cmidrule(lr){2-3}
            	& $\kappa_t \propto \nicefrac 1 \ii$ & $\kappa_t \equiv 1$ & \\
            	\midrule
            	\texttt{4e-3} & $2.862$          & $2.700$ & $2.713$ \\
            	\texttt{6e-3} & $2.792$          & $2.694$ & $2.701$ \\
            	\texttt{8e-3} & $2.749$          & $2.698$ & $2.695$ \\
            	\texttt{1e-2} & $2.721$          & $2.708$ & $2.692$ \\
            	\texttt{2e-2} & $2.662$          & $2.767$ & $2.718$ \\
            	\texttt{4e-2} & $\mathbf{2.648}$ & $2.873$ & $2.749$ \\
            	\bottomrule
            \end{tabular}
		\end{minipage}
		\caption{Qwen2-0.5B learning-rate sweep.}
		\label{tab:qwen_results}
	\end{subfigure}
	\hfill
	\begin{subfigure}[t]{0.50\textwidth}
		\vspace{0pt}
		\centering
		\begin{minipage}[t][0.22\textheight][c]{\linewidth}
			\centering
			\includegraphics[width=\linewidth]{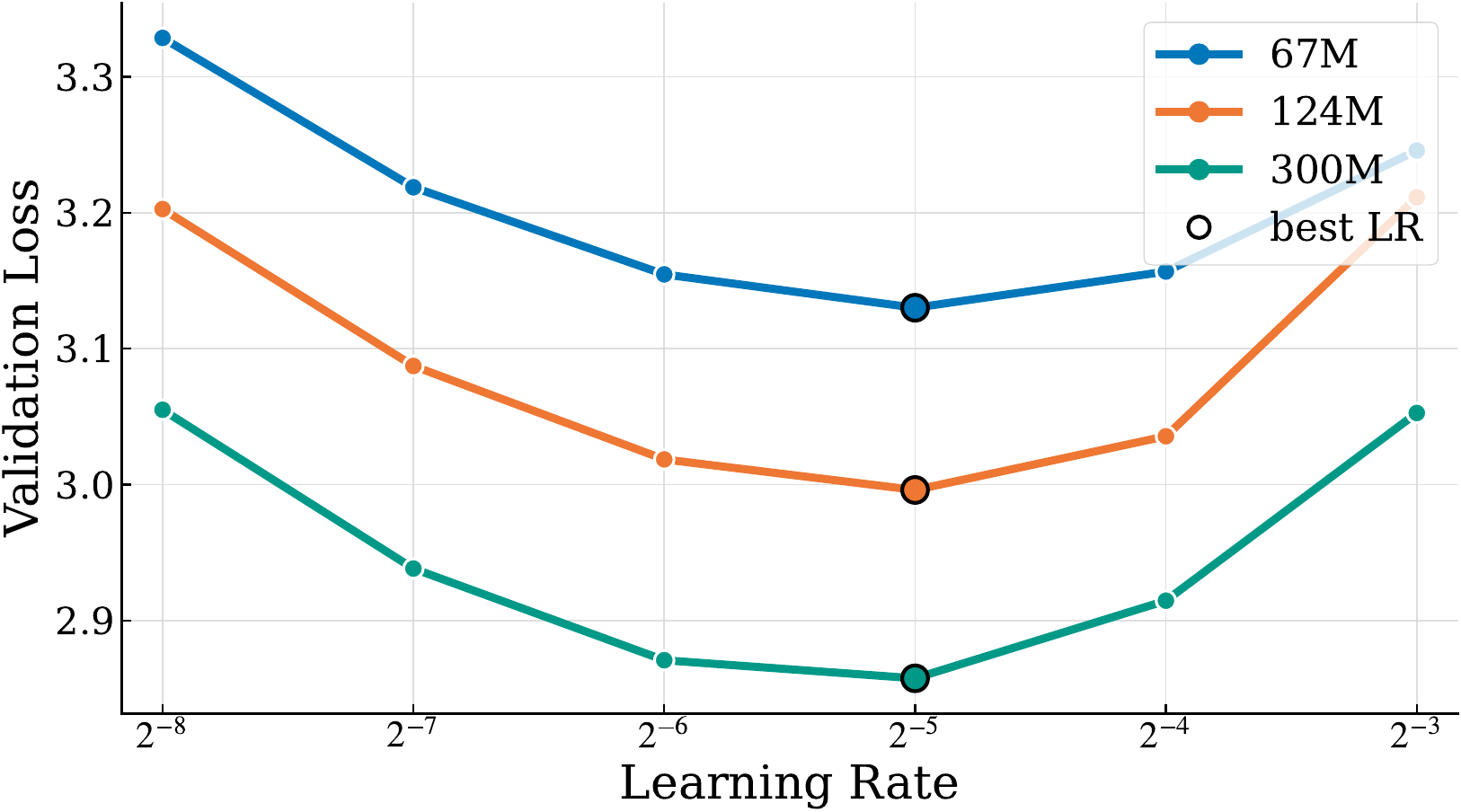}
		\end{minipage}
		\caption{Width transfer.}
		\label{fig:lr_transfer}
	\end{subfigure}
	\caption{
		Validation loss across learning rates.
		\textbf{(a)} Qwen2-0.5B trained by \ourAlg{} with polynomial angular decay ($\kappa_t \propto \nicefrac 1 t$), without a decay ($\kappa_t \equiv 1$), and \textsc{NorMuon}.
		\textbf{(b)} Width transfer for \ourAlg{} at model widths $512$, $768$, and $1280$.
	}
	\label{fig:combined_qwen_lr_transfer}
    \vspace{-3mm}
\end{figure}

\paragraph{PlainLM.} We adopt the architecture from \citet{ajroldi2024plainlm}, which is based on a nanoGPT \citep{karpathy2022nanogpt} implementation modified to include recent architectural improvements such as RoPE \citep{su_rope}, RMSNorm normalization \citep{zhang_rmsnorm}, and SwiGLU activations \citep{shazeer_glu_2020}. To compare performance, we sweep the learning rate of \textsc{AdamW}, \textsc{NorMuon}, and \muown\ across \texttt{[5e-4, 1e-3, 2e-3, 4e-3, 8e-3]}. \ourAlg\ was swept across \texttt{[5e-3, 1e-2, 2e-2, 4e-2, 8e-2]}. Other parameters are set to their usual values and can be found in Table~\ref{tab:optimizer_hyperparameters}. We start decay branches after $2.5$B, $5$B, $7.5$B, and $10$B tokens and the results can be found in Figure~\ref{fig:teaser}. \ourAlg\ outperforms all baselines for each horizon and achieves a speedup of approximately $2\times$ over \textsc{AdamW}, and $1.5\times$ over \textsc{NorMuon} and \muown.

Next we empirically examine the \emph{angular displacement} discussed in Section~\ref{subsec:angular}. Therefore we train the 124M model for Chinchilla-optimal 2.5B tokens \citep{hoffmann_training_2022} with \textsc{Muon} with and without weight decay, \textsc{NorMuon}, \muown, as well as \ourAlg\ with ($\kappa_t \propto \nicefrac 1 t$) and without ($\kappa_t \equiv 1$) angular decay. We perform a learning-rate sweep over \texttt{[1e-3, 2e-3, 4e-3, 6e-3, 8e-3, 1e-2]} and report the optimal run for each optimizer. Figure~\ref{fig:124M-angular} makes \ourAlg's angular mechanism visible: with a constant step without angular decay the per-step rotation stays roughly flat, whereas every decaying-angle method---including \ourAlg{}---anneals $\overline{\theta}$ smoothly towards zero. \Cref{fig:124M-angular} also corroborates the three regimes mentioned in \Cref{tab:regimes}. First, adding weight decay to \textsc{Muon} ($\lambda=0.1$) lifts its angular displacement to a sustained, non-vanishing plateau instead of letting it decay. Second, \muown{} results in an annealed angular update, while \ourAlg\ allows for arbitrary annealing. At a well-tuned learning rate this annealing lets \ourAlg{} take large, aggressive directional steps and reach the lowest perplexity of all methods (\Cref{fig:124M-lr}).

Finally we study the \emph{learning rate transfer} across widths. Following \citet{pethick_training_2025}, we train our 124M base architecture for 5100 iterations with model widths 512, 768, and 1280 (i.e., total parameter count of 67M, 124M, and 300M). For each width, we run \ourAlg\ with learning rates $\eta = 2^{-8}, \dots, 2^{-3}$ and report the final validation loss in Figure~\ref{fig:lr_transfer}. We note that the optimal learning rate perfectly transfers across widths, validating the effectiveness of the scale factor discussed in Section~\ref{sec:angular_muown.angular_muown.scale_factor}.

\paragraph{Qwen2-0.5B.} The same pattern transfers to a Qwen2-0.5B architecture (\Cref{tab:qwen_results}). At its well-tuned learning rate, the polynomial schedule attains the best loss overall and improves over \textsc{NorMuon} for every $\eta \ge 2\times 10^{-2}$, whereas the unscheduled variant degrades sharply as $\eta$ grows. Making the angular step size explicit thus shifts the optimum towards larger, more aggressive steps, consistent with the angular-annealing view of \Cref{sec:motivation_and_prelims}.

\begin{wraptable}{r}{0.45\textwidth}
	\vspace{-0.75\baselineskip}
	\centering
	\setlength{\tabcolsep}{3.5pt}
	\renewcommand{\arraystretch}{0.95}
	\begin{tabular}{@{}lccc@{}}
		\toprule
		\multirow{2}{*}{$\eta$}
		& \multicolumn{2}{c}{\ourAlg{}}
		& \multirow{2}{*}{\textsc{Muon}} \\
		\cmidrule(lr){2-3}
		& $\sqrt{\max(m,n)}$  & $ \sqrt{\max(1, m/n)}$ & \\
		\midrule
		  \texttt{3e-4} & $2.6124$          & ---               & $2.5297$          \\
		\texttt{6e-4} & $2.5278$          & ---               & $\mathbf{2.5255}$ \\
		\texttt{1e-3}      & $2.4887$          & $2.6952$          & $2.5263$          \\
		\texttt{2e-3} & $\mathbf{2.4807}$ & $2.5901$          & $2.5353$          \\
		\texttt{4e-3} & ---               & $2.5198$          & ---               \\
		\texttt{8e-3} & ---               & $\mathbf{2.4914}$ & ---               \\
		\bottomrule
	\end{tabular}
	\caption{Learning rate sweep on a DeepSeek-V3-style MoE-model \citep{liu_muon_2025} with 1.1B-A370M parameters. For \ourAlg{}, we ablate the spectral condition scaling $s_{m, n} = \sqrt{\max(1, m/n)}$ and the RMS-matching scaling $s_{m,n} = \sqrt{\max(m, n)}$. }
	\label{tab:angular-muown-moe}
	\vspace{-\baselineskip}
\end{wraptable}
\paragraph{Mixture-of-Experts.} Moving beyond dense transformer models, we extend our experimental study to Mixture-of-Experts (MoE) models \citep{shazeer_outrageously_2017} using the widely adopted Megatron-LM codebase \citep{megatron-lm}. We consider a 1.1B DeepSeek-V3-style MoE \citep{deepseek-ai_deepseek-v3_2025} with 370M active parameters (1.1B-A370M), following the architectural choices of the Moonlight recipe for Muon \citep{liu_muon_2025}. 
The improvement of \ourAlg\ over Muon persists in this sparse model setup. For \ourAlg, some of the best points lie at the edge of the tested grid, so the reported MoE losses may be conservative. We kept the learning-rate grids comparable across algorithms to avoid giving either method a larger tuning budget.

\section{Related Work}
\label{sec:related_work}
\paragraph{Matrix-aware optimizers.}
Early methods exploiting the matrix structure of feedforward layers include \textsc{Shampoo} \citep{gupta_shampoo_2018} and \textsc{Soap} \citep{vyas_soap_2024}. Another notable work is \textsc{Muon} \citep{jordan2024muon}, which performs spectral-norm steepest descent via Newton-Schulz orthogonalization. Descendants of \textsc{Muon} include \textsc{NorMuon} \citep{li_normuon_2025}, which applies neuron-wise adaptive scaling of the update after orthogonalization, and \textsc{Muown} \citep{lion_muown_2026} which splits each weight into row magnitudes and a direction component internally. While \textsc{Muown} updates the direction in the ambient space, leaving the row-scales to drift freely, our parameterization fixes them to unit norm.

\paragraph{Weight normalization and reparameterization.} \citet{salimans_weight_2016} introduce the weight-norm parameterization to decouple magnitude learning from the direction, making the direction component of the weights scale-invariant in the sense that any positive rescaling of the direction component does not alter the underlying function. 
\citet{NormMatters2018Hoffer} present a modification of the parameterization to fix the learnable magnitude to a constant value.

\paragraph{Scale invariance, angular step size, and weight decay.}
For scale-invariant neural networks, the effective step size is the angular rate of change, which is governed implicitly through the weight norm \citep{laarhoven_l2_2017, NormMatters2018Hoffer}. \citet{kosson_rotational_2024} formalize this as a rotational equilibrium in which weight decay balances norm growth so the angular update reaches a stable value. They propose a rotational wrapper whose main goal is to provide an update whose average angular update matches the optimizer's predicted equilibrium rotation. In their analysis, this equilibrium rotation is induced by the choice of the learning rate in conjunction with the weight decay strength. These works focus on the \emph{implicitly} induced angular step size behavior. While \citet{laarhoven_l2_2017} focuses on the implicit weight norm dynamics, \citet{kosson_rotational_2024} attempt to mimic the induced rotational equilibrium throughout training by enforcing the predicted average angular rotation directly. In contrast, \ourAlg{} makes the angular step size an explicit and \emph{independently} scheduled quantity, decoupled from the radial magnitude update and its schedule, which renders weight decay redundant on the directions.

\paragraph{Concurrent work.} 
Very recently we became aware of the concurrent blog post \citep{hagele2026improving}. Similar to this work, \citet{hagele2026improving} keep the directional factor of a magnitude-direction parameterization on a fixed norm. The main overlap with \ourAlg\ is therefore not the magnitude-direction decomposition itself, which is already used by \textsc{Muown} \citep{lion_muown_2026}, but the fixed-norm treatment of the direction and the resulting direct control of angular updates.
The methods differ in their geometry: \ourAlg\ uses the row-wise Riemannian gradient $\GradOp_{U} L = \operatorname{Diag}(g) \bigl(G - \operatorname{Diag}(\operatorname{diag}(G U^\top)) U\bigr)$, and introduces an explicit angular multiplier $\kappa_t$. 
By contrast, \citet{hagele2026improving} use the ambient chain-rule gradient for the on-sphere direction, $G_{\widehat W} = \operatorname{Diag}(\gamma_{\rm row})\, G\, \operatorname{Diag}(\gamma_{\rm col})$ for \(W=\operatorname{Diag}(\gamma_{\rm row})\widehat W \operatorname{Diag}(\gamma_{\rm col})\), and project the direction $\widehat W$ back to the chosen sphere after its update. Their reported experiments mainly use endpoint-dependent linear decay of the learning rates, whereas \ourAlg\ is designed as a drop-in replacement for the standard WSD schedule, with the angular schedule controlled separately through $\kappa_t$.

\section{Conclusion and Limitations}
\label{sec:conclusion_and_limitations}
We observe that \muown’s stored row norm implicitly anneals the angular step size. Based on this observation we propose \ourAlg, which makes this mechanism explicit by optimizing directly on the row-oblique manifold and scheduling the angular multiplier explicitly. Across the studied language-model pre-training settings, this explicit control yields considerable improvements over \textsc{AdamW}, \textsc{NorMuon}, and \muown.

The main limitation is that our evaluation is centered on language-model pre-training and limited to medium model sizes. A broader study of architectures, training regimes, and scaling to 100B and larger parameter models remains open.

\subsubsection*{Acknowledgments}
This work was supported under project ID a0184 as part of the Swiss AI Initiative, through a small grant from the ETH Domain and computational resources provided by the Swiss National Supercomputing Centre (CSCS) under the Alps infrastructure. Kai Lion is supported by Swiss National Science Foundation (SNSF) Sinergia Funding No. 216600. Florian Hübler acknowledges financial support from the ETH research grant and Swiss National Science Foundation (SNSF) Project Funding No. 200021-207343. Antonio Orvieto acknowledges the financial support of the Hector Foundation. Niao He is supported by an ETH research grant funded through the ETH Zurich Foundation and by an SNSF Starting Grant.

\printbibliography
\clearpage
\appendix
\section{Experimental Details}
\label{appdx:experimental-details}

\paragraph{Common setup.} We adopt the experimental setup of \citet{ajroldi2024plainlm}, which is based on a nanoGPT \citep{karpathy2022nanogpt} implementation augmented with recent architectural improvements such as RoPE \citep{su_rope}, RMSNorm normalization \citep{zhang_rmsnorm}, and SwiGLU activations \citep{shazeer_glu_2020}. All models are pre-trained on FineWeb-Edu \citep{lozhkov2024fineweb-edu} with a warmup-stable-decay learning-rate schedule \citep{hu_minicpm_2024}, and we report perplexity (equivalently validation loss) at a fixed token budget. Following \textsc{Muown}, the matrix-aware optimizer acts on all 2D hidden weight matrices (every 2D weight except the token embeddings and the LM head); the remaining parameters are handled by AdamW. For \ourAlg{} the row scales $g$ are updated with Adam under the base learning rate $\eta_t$, while the directional update orthogonalizes the projected momentum with Newton--Schulz and rescales it by the angular learning-rate multiplier $\kappa_t$ (the radial and angular dynamics are therefore scheduled independently, cf.\ \Cref{sec:riemannian_muown}).

\paragraph{124M checkpoint-branching (Figure~\ref{fig:teaser}).} The hyperparameters used by the different optimizers can be found in Table~\ref{tab:optimizer_hyperparameters}.

\newcommand{\NA}{\textemdash}

\begin{table}
    \centering
    \caption{Optimizer hyper-parameters, underlines values denote the tuned learning rates.}
    \label{tab:optimizer_hyperparameters}
    \small
    \setlength{\tabcolsep}{5pt}
    \renewcommand{\arraystretch}{1.15}

    \begin{tabularx}{\linewidth}{lcccccc}
        \toprule
        Optimizer
        & Learning rate $\eta$
        & $\beta_1$
        & $\beta_2$
        & $\lambda$
        & \makecell{Orth.\\steps}
        & \texttt{decay\_scale}\\
        \midrule

        \textsc{AdamW}
        & \texttt{5e-4, 1e-3, \underline{2e-3}, 4e-3, 8e-3}
        & 0.9
        & 0.95
        & 0.1
        & \NA
        & \NA \\

        \textsc{NorMuon}
        & \texttt{5e-4, 1e-3, 2e-3, \underline{4e-3}, 8e-3}
        & 0.95
        & 0.95
        & \NA
        & 5
        & \NA \\

        \muown
        & \texttt{5e-4, 1e-3, 2e-3, \underline{4e-3}, 8e-3}
        & 0.95
        & \NA
        & \NA
        & 5
        & \NA \\

        \ourAlg{}
        & \texttt{5e-3, 1e-2, 2e-2, \underline{4e-2}, 8e-2}
        & 0.95
        & \NA
        & \NA
        & 5
        & $0.001$ \\

        \bottomrule
    \end{tabularx}
\end{table}

\paragraph{124M learning-rate sweep (\Cref{fig:124M}).} We train a $124$M-parameter model for $2.5$B tokens and sweep a log-spaced learning-rate grid (\Cref{fig:124M-lr}), comparing \textsc{Muon}, NorMuon \citep{li_normuon_2025}, \textsc{Muown}, and \ourAlg{} both with and without its angular schedule. \textsc{Muon} uses weight decay $\lambda = 0.1$ and all other optimizers use $\lambda = 0$, the settings found optimal in the \textsc{Muown} study. \ourAlg{}'s angular schedule is the inverse-polynomial $\kappa_t^{\mathrm{poly}} = (1 + c\,(t - t_w)_+)^{-p}$, using the default values $c = 10^{-3}$ and $p = 1$. The angular-displacement curves in \Cref{fig:124M-angular} each correspond to the best-tuned learning rate per optimizer (lowest final validation loss). The per-step row rotation $\theta = \arccos\langle \mathbf{u}, \mathbf{u}'\rangle$ is read directly from the stored weight rows before and after each step.

\paragraph{Qwen2-0.5B sweep (Table~\ref{tab:qwen_results}).} To probe transfer beyond the architecture above, we additionally train a Qwen2-0.5B \citep{yang_qwen2_2024} model without weight decay ($\lambda = 0$). We sweep $\eta \in \{4, 6, 8\} \times 10^{-3} \cup \{1, 2, 4\} \times 10^{-2}$ and compare \ourAlg{} with its polynomial angular schedule against the constant-angular variant ($\kappa_t \equiv 1$) and NorMuon.

\paragraph{MoE Experiments (Table~\ref{tab:angular-muown-moe}).} The model under consideration is a downscaled variant of Moonshot's Moonlight-16B-A3B \citep{liu_muon_2025} reduced to 14 transformer layers (1 dense + 13 MoE), hidden size 1024, and 32 experts (top-6, sigmoid router with learned expert bias) plus one shared expert, for 1.1B total and 370M active parameters. We keep Moonlight's DeepSeek-V3-style MoE design but replace MLA with standard grouped-query attention (16 heads, 4 KV groups) to avoid latent-projection special cases in the optimizer comparison. Training runs in patched Megatron-LM \citep{megatron-lm} on FineWeb-Edu (sample-100BT, GPT-NeoX tokenizer, sequence length 4096) at a global batch of 256 ($\approx$1.05M tokens/step) for 30B tokens in total. For the optimizer settings, we mostly follow \cite{liu_muon_2025}: \textsc{Muon} and \ourAlg{} are set with momentum 0.95 with Nesterov, shape scaling of $\sqrt{\max(m, n)}$, 5 Newton–Schulz steps, and weight decay 0.1 for Muon. Moreover, routers, embeddings, norms, and biases are handled by Adam in all cases. For \ourAlg{}, we additionally consider the shape scaling factor of $s_{m,n} = \sqrt{\max(1, m/n)}$, which results in a dimension-independent rotation.

\section{Proof of Theorem \texorpdfstring{\ref{thm:conv}}{3.2}}
\label{sec:app.missing_proofs}

\begin{figure}[t]
	\vspace{-1.5\baselineskip}
	\begin{algorithm}[H]
		\caption{Idealized Directional \ourAlg\ Update}
		\label{alg:idealised_rie_muown}
		{\linespread{1.3}\selectfont
		\small
		\begin{algorithmic}
			\Require $\iterationU \fin \in \Mc$, $\iterationM 0 \gets \GradOp \mathcal L (\iterationU 1, \iterationxi 1)$, stepsize $\eta$, momentum $\beta$ \vspace{2mm}
            \State $\Mt \gets \beta \Mtm + (1-\beta)\mathrm{grad}\mathcal L(\Ut, \xit)$
            \State $\Ot \gets \argmin_{\Dbf \in \mathcal T_{\Ut} \mathcal M, \opnorm{\Dbf}\leq 1} \inner{\Mt}{\Dbf}$ \Comment{Tangent-space aware orthogonalization}
            \State $\Utp \gets \mathrm{Retr}_{\Ut}\pare{\ssi \Ot}$ \Comment{Retraction}
		\end{algorithmic}
		}
	\end{algorithm}
\end{figure}

In this section we provide the missing proof for Section~\ref{sec:angular_muown.conv_guarantee}. The proof follows the classical arguments from \citep{MomentumImprovesNormalized2020cutkosky}.

\paragraph{Notation.} Let $m, n \in \Ngeq$ and for matrices $\Abf \in \R^{m \times n}$ let $\Abf_i\in\R^n$ denote the i-th row of $\Abf$. Let $\Mc = \set{\Abf \in \R^{m \times n} \suchthat \rownorm{\Abf}=1}$ denote the row-oblique manifold,
\begin{equation*}
    \Tan \Ubf \coloneqq \set{\Abf \in \R^{m \times n} \suchthat \langle \Abf_i, \Ubf_i \rangle = 0 \text{ for all } i \in[m]}
\end{equation*}
the tangent space at $\Ubf \in \Mc$, and $\Proj \Ubf \Abf = \Abf - \Diag{\diag{\Abf \Ubf^\top}} \Ubf$ the orthogonal projector onto $\Tan \Ubf$. Furthermore we denote the tangent-space restricted dual norm as $$\norm{\Abf}_{\Ubf, *} \coloneqq \max_{\Vbf \in \Tan \Ubf, \opnorm{\Vbf} \leq 1} \langle \Abf, \Vbf \rangle,$$
and, for $\Ubf \in \Mc, \Abf \in \Tan \Ubf$, use the retraction $\Retr \Ubf \Abf = \Pi_\Mc \pare{\Ubf + \Abf}$.

Now we first introduce the required assumptions.

\begin{assum}\label{assum:lb}
    The objective function is lower bounded with $\mathcal L\pare{\iterationU \fin} - \inf_{\Ubf \in \Mc} \mathcal L(\Ubf) \leq \Dz$.
\end{assum}

\begin{assum}\label{assum:smooth}
    For all $\Ubf \in \Mc$ and $\Abf \in \Tan \Ubf$ with $\Vbf\coloneqq \Retr \Ubf \Abf$ we have
    \begin{align*}
        &\mathcal L \pare{\Vbf} \leq \mathcal L (\Ubf) + \langle \GradOp \mathcal L (\Ubf), \Abf \rangle + \frac{L_{\mathcal L}}{2}\opnorm{\Abf}^2,\\
        &\nucnorm{\GradOp \mathcal L(\Ubf) - \GradOp \mathcal L \pare{\Vbf}} \leq L_{g} \opnorm \Abf,
    \end{align*}
    and denote $L \coloneqq \max\set{L_{\mathcal L}, L_g}$.
\end{assum}
\begin{assum}\label{assum:noise}
    The gradient oracle is unbiased and has finite variance, i.e., for all $\Ubf \in \Mc$,
    \begin{equation*}
        \Exp{\GradOp \mathcal L \pare{\Ubf, \xi}} = \GradOp \mathcal L \pare{\Ubf} \qquad \text{and} \qquad \Exp{\frobnorm{\GradOp \mathcal L \pare{\Ubf, \xi} - \GradOp \mathcal L \pare{\Ubf}}^2} \leq \sigma^2.
    \end{equation*}
\end{assum}

Additionally we require the following standard inequalities.

\begin{lemma}\label{lem:technical}
    For all $\Ubf \in \Mc$ and $\Abf \in \Tan \Ubf$ we have
    \begin{equation*}
        \frobnorm{\Abf} \leq \norm{\Abf}_{\Ubf, *} \leq \nucnorm{\Abf} \leq \sqrt{\min\set{m, n}} \frobnorm{\Abf}.
    \end{equation*}
\end{lemma}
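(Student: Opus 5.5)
We prove the three inequalities of \Cref{lem:technical} separately, from left to right, using only that $\Tan \Ubf$ is a linear subspace of $\R^{m\times n}$ and standard facts about Schatten norms.

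\textbf{First inequality.} The plan is to exhibit a feasible test matrix for the maximization defining $\dnorm{\cdot}$ at $\Ubf$. If $\Abf = \zerob$ there is nothing to show. Otherwise set $\Vbf \coloneqq \Abf/\opnorm{\Abf}$. Since $\Abf \in \Tan \Ubf$ and the tangent space is closed under scaling, $\Vbf \in \Tan \Ubf$ with $\opnorm{\Vbf} = 1$. Hence
\[
\norm{\Abf}_{\Ubf,*} \geq \inner{\Abf}{\Vbf} = \frobnorm{\Abf}^2/\opnorm{\Abf}.
\]
It remains to compare the last quantity with $\frobnorm{\Abf}$. This follows from $\opnorm{\Abf} \leq \frobnorm{\Abf}$, since the largest singular value is at most the $\ell_2$ norm of the singular value vector. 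Dividing gives $\frobnorm{\Abf}^2/\opnorm{\Abf} \geq \frobnorm{\Abf}$.

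\textbf{Second inequality.} Here the argument is to relax the constraint. Dropping $\Vbf \in \Tan \Ubf$ enlarges the feasible set, so
\[
\norm{\Abf}_{\Ubf,*} \leq \max_{\opnorm{\Vbf}\leq 1} \inner{\Abf}{\Vbf} = \nucnorm{\Abf},
\]
where the equality is the duality of the spectral and nuclear norms. This duality follows from von Neumann's trace inequality, with equality attained at $\Vbf = \Pbf\Qbf^\top$ built from the SVD $\Abf = \Pbf\Sigma\Qbf^\top$.

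\textbf{Third inequality.} This is Cauchy--Schwarz applied to the at most $r = \min\set{m,n}$ nonzero singular values:
\[
\sum_{j} \sigma_j \leq \sqrt{r}\,\Big(\sum_j \sigma_j^2\Big)^{1/2} = \sqrt r\,\frobnorm{\Abf}.
\]

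None of the steps is a real obstacle. The only point needing care is the first inequality, where one must check that the test matrix $\Abf/\opnorm{\Abf}$ actually lies in the tangent space. This holds exactly because the hypothesis $\Abf \in \Tan \Ubf$ is assumed; for general $\Abf$ one would need to project first, and the lower bound would then only hold for $\Proj{\Ubf}{\Abf}$.
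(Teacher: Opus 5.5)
Your proof is correct. The paper gives no argument at all for this lemma, only calling these ``standard inequalities''; your three steps (testing with the feasible tangent direction $\Abf/\opnorm{\Abf}$, relaxing the tangency constraint to get spectral--nuclear duality, and Cauchy--Schwarz on the at most $\min\set{m,n}$ nonzero singular values) are exactly that standard reasoning. Your observation that only the first inequality uses $\Abf \in \Tan \Ubf$ is worth keeping: the paper implicitly applies $\dnorm{\cdot} \leq \nucnorm{\cdot}$ to $\Ebf_t = \Mt - \GradOp \mathcal L(\Ut)$, which need not lie in $\Tan{\Ut}$.
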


Now we are ready to provide the descent lemma.

\begin{lemma}[Descent Lemma]\label{lem:descent_lemma}
    Let Assumptions \ref{assum:lb} and \ref{assum:smooth} hold. Then the iterates generated by Algorithm~\ref{alg:idealised_rie_muown} satisfy
    \begin{equation*}
        \iSum \dnorm{\GradOp \mathcal L(\Ut)}
        \leq \frac {\Dz} \ssi + \frac{\ssi L T}{2} + 2 \iSum \dnorm{\Mt - \GradOp \mathcal L(\Ut)}
    \end{equation*}
\end{lemma}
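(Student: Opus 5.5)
We follow the standard normalized-momentum argument of \citep{MomentumImprovesNormalized2020cutkosky}, transplanted to the manifold through the smoothness inequality in Assumption~\ref{assum:smooth}. Fix $t$ and apply the first inequality of Assumption~\ref{assum:smooth} with $\Ubf = \Ut$ and $\Abf = \ssi \Ot \in \Tan{\Ut}$. This is admissible because the idealized update keeps $\Ot$ in the tangent space, and $\Utp = \Retr{\Ut}{\ssi\Ot}$. Since $\opnorm{\Ot}\le 1$, we obtain
\begin{equation*}
    \mathcal L(\Utp) \le \mathcal L(\Ut) + \ssi \inner{\GradOp \mathcal L(\Ut)}{\Ot} + \frac{L\ssi^2}{2}.
\end{equation*}

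Next we split the inner product as $\inner{\GradOp\mathcal L(\Ut)}{\Ot} = \inner{\Mt}{\Ot} + \inner{\GradOp\mathcal L(\Ut) - \Mt}{\Ot}$. By definition of $\Ot$ as the minimizer over the tangent spectral unit ball, $\inner{\Mt}{\Ot} = -\max_{\Dbf\in\Tan{\Ut},\opnorm{\Dbf}\le1}\inner{\Mt}{\Dbf} = -\dnorm{\Mt}$. We note that $\Mt$ need not be tangent at $\Ut$, since it accumulates gradients from earlier points. This causes no difficulty: the dual norm $\dnorm{\cdot}$ is defined for arbitrary $\Abf$, and only its values on tangent directions enter. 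The second term is bounded by the generalized Cauchy--Schwarz inequality for this dual norm, using that $\Ot \in \Tan{\Ut}$ with $\opnorm{\Ot}\le 1$. This gives $\inner{\GradOp\mathcal L(\Ut)-\Mt}{\Ot} \le \dnorm{\Mt - \GradOp\mathcal L(\Ut)}$. It then remains to compare $\dnorm{\Mt}$ with the gradient norm. Since $\dnorm{\cdot}$ is a seminorm (a maximum of linear functionals over a symmetric set), the triangle inequality yields
\begin{equation*}
    -\dnorm{\Mt} \le -\dnorm{\GradOp\mathcal L(\Ut)} + \dnorm{\Mt-\GradOp\mathcal L(\Ut)}.
\end{equation*}
Combining these bounds gives the one-step inequality
\begin{equation*}
    \ssi\dnorm{\GradOp\mathcal L(\Ut)} \le \mathcal L(\Ut)-\mathcal L(\Utp) + 2\ssi\dnorm{\Mt-\GradOp\mathcal L(\Ut)} + \frac{L\ssi^2}{2}.
\end{equation*}

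Finally, we sum over $t=1,\dots,T$. The function values telescope to $\mathcal L(\iterationU\fin) - \mathcal L(\Ubf_{T+1}) \le \mathcal L(\iterationU\fin) - \inf_{\Mc}\mathcal L \le \Dz$ by Assumption~\ref{assum:lb}, and dividing by $\ssi$ yields the claim. The only point needing care is the dual-norm step: we must check that the seminorm restricted to tangent directions still obeys the triangle inequality and Hölder-type bound, and that the first-order term in Assumption~\ref{assum:smooth} is exactly the tangent pairing. Both hold directly from the definitions, so no real obstacle is expected. The second inequality of Assumption~\ref{assum:smooth} is not used here; it will be needed only for the momentum-error bound.
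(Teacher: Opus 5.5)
Your proof is correct and is essentially the paper's argument. You apply the first smoothness inequality with $\Abf = \ssi \Ot$, use $\inner{\Mt}{\Ot} = -\dnorm{\Mt}$ together with the Hölder-type bound and the triangle inequality for $\dnorm{\cdot}$ to reach $-\dnorm{\GradOp \mathcal L(\Ut)} + 2\dnorm{\Mt - \GradOp \mathcal L(\Ut)}$, and then telescope using Assumption~\ref{assum:lb}; your remark that $\Mt$ need not be tangent is a useful clarification the paper leaves implicit but does not alter the argument.
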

\begin{proof}
    The proof arguments follow \citep{MomentumImprovesNormalized2020cutkosky}, we include it for completeness. By Assumption~\ref{assum:smooth} we have
    \begin{equation*}
        \mathcal L(\Utp) - \mathcal L(\Ut) 
        \leq \ssi \langle \GradOp \mathcal L(\Ut), \Ot \rangle + \frac {\ssi^2 L_{\mathcal L}} 2 \opnorm{\Obf}^2 
        \leq \ssi \langle \Mt, \Ot \rangle - \ssi \langle \Ebf_t , \Ot \rangle+ \frac{\ssi^2 L}{2},
    \end{equation*}
    where $\Ebf_t \coloneqq \Mt - \GradOp \mathcal L \pare{\Ut}$. 
    The Hölder-inequality and $\dnorm{\Mt} \geq \dnorm{\GradOp \mathcal L (\Ut)} - \dnorm{\Ebf_t}$ further yield
    \begin{equation*}
        \langle \Mt, \Ot \rangle - \langle \Ebf_t , \Ot \rangle
        \leq - \dnorm{\Mt} + \dnorm{\Ebf_t}
        \leq - \dnorm{\GradOp \mathcal L (\Ut)} + 2 \dnorm{\Ebf_t}.
    \end{equation*}
    Summing up and using Assumption~\ref{assum:lb} yields the claim.
\end{proof}

Next we derive the noise bound, which follows the same arguments as \muown's \citep{lion_muown_2026}. We include it for completeness.

\begin{lemma}[Deviation Bound]\label{lem:deviation_bound}
    Let Assumptions \ref{assum:smooth} and \ref{assum:noise} hold, and denote $r \coloneqq \min\set{m, n}$. Then the iterates generated by Algorithm~\ref{alg:idealised_rie_muown} satisfy
    \begin{equation*}
        \iSum \Exp{\nucnorm{\Et}}
        \leq \frac{\sqrt r\sigma}{1-\beta} + \sqrt r \sigma T \sqrt{1-\beta} + \frac{\ssi L T}{1-\beta}.
    \end{equation*}
\end{lemma}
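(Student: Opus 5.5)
Following the standard momentum analysis of \citep{MomentumImprovesNormalized2020cutkosky} as adapted in \muown's proof, I unroll the error recursion for $\Et = \Mt - \GradOp \mathcal L(\Ut)$. Writing $\Gbf_t \coloneqq \GradOp\mathcal L(\Ut)$, $\Gbf_t(\xi_t)\coloneqq \GradOp\mathcal L(\Ut,\xit)$, $\Sbf_t \coloneqq \Gbf_{t-1} - \Gbf_t$ (the drift term) and $\boldsymbol{\varepsilon}_t \coloneqq \Gbf_t(\xi_t) - \Gbf_t$ (the noise term), the update $\Mt = \beta\Mtm + (1-\beta)\Gbf_t(\xi_t)$ gives
\begin{equation*}
    \Et = \beta \Ebf_{t-1} + \beta \Sbf_t + (1-\beta)\boldsymbol{\varepsilon}_t .
\end{equation*}
Unrolling from the initialization $\iterationM 0 = \GradOp\mathcal L(\iterationU 1,\iterationxi 1)$ yields
\begin{equation*}
    \Et = \beta^{t}\Ebf_0 + \sum_{k=1}^{t}\beta^{t-k+1}\Sbf_k + (1-\beta)\sum_{k=1}^{t}\beta^{t-k}\boldsymbol{\varepsilon}_k ,
\end{equation*}
where $\Ebf_0$ is a pure noise term with $\Exp{\frobnorm{\Ebf_0}^2}\le\sigma^2$. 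I bound each piece in the nuclear norm.

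\textbf{Initial term.} By Lemma~\ref{lem:technical}, $\nucnorm{\cdot}\le\sqrt r\frobnorm{\cdot}$, and Jensen's inequality gives $\Exp{\nucnorm{\beta^t\Ebf_0}}\le \sqrt r\sigma\beta^t$. Summing the geometric series over $t$ gives the term $\sqrt r\sigma/(1-\beta)$.

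\textbf{Noise term.} The increments $\boldsymbol{\varepsilon}_k$ form a martingale difference sequence by Assumption~\ref{assum:noise}, since $\Ubf_k$ is determined before $\xi_k$ is drawn. Cross terms therefore vanish, and
\begin{equation*}
    \Exp{\frobnorm{\textstyle\sum_k \beta^{t-k}\boldsymbol{\varepsilon}_k}^2}\le \sigma^2/(1-\beta^2).
\end{equation*}
Applying Jensen and Lemma~\ref{lem:technical} bounds the expected nuclear norm by $(1-\beta)\sqrt r\sigma/\sqrt{1-\beta^2}\le\sqrt r\sigma\sqrt{1-\beta}$. Summing over $T$ steps gives the middle term.

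\textbf{Drift term.} This is where the manifold enters. The second part of Assumption~\ref{assum:smooth}, applied with $\Ubf=\Ubf_{k-1}$ and $\Abf=\ssi\Obf_{k-1}\in\Tan{\Ubf_{k-1}}$, is stated exactly along the retraction. Since $\Ubf_k=\Retr{\Ubf_{k-1}}{\ssi\Obf_{k-1}}$, it gives $\nucnorm{\Sbf_k}\le L_g\ssi\opnorm{\Obf_{k-1}}\le L\ssi$. This uses that the idealized orthogonalization keeps $\Obf_{k-1}$ in the tangent space with unit spectral norm. The triangle inequality then gives $\nucnorm{\sum_k\beta^{t-k+1}\Sbf_k}\le \ssi L/(1-\beta)$, and summing over $t$ yields $\ssi LT/(1-\beta)$.

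Adding the three bounds gives the claim. I expect the main obstacle to be the drift step. In Euclidean \muown{} it is immediate, but here the gradients live in different tangent spaces, and one might worry about parallel transport. Because the assumption is formulated directly on ambient Riemannian gradients along retraction curves, no transport is needed. Still, I must verify that the tangent-restricted argmin in Algorithm~\ref{alg:idealised_rie_muown} is feasible (it is, since $\zerob\in\Tan\Ubf$) and has $\opnorm{\Obf}\le1$. A secondary subtlety is the measurability and ordering needed for the martingale argument, which I handle by conditioning on the filtration generated by $\xi_1,\dots,\xi_{k-1}$.
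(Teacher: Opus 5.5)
Your proposal is correct and follows essentially the same route as the paper. Both arguments unroll $\Et$ into three pieces and bound each in expectation. The initial error is handled by Jensen and $\nucnorm{\cdot}\le\sqrt r\frobnorm{\cdot}$, the martingale noise sum gives $\sqrt r\sigma\sqrt{1-\beta}$, and the drift term is controlled by applying the second part of Assumption~\ref{assum:smooth} along the retraction to get $\nucnorm{\St}\le\ssi L$. The only difference is indexing: you start from $\Ebf_0$, which implicitly sets $\GradOp\mathcal L(\iterationU 0)\coloneqq\GradOp\mathcal L(\iterationU 1)$ so that $\iterationS 1=\zerob$, whereas the paper starts from $\iterationE 1=\iterationM 1-\GradOp\mathcal L(\iterationU 1)$ and sums from $\tau=2$.
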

\begin{proof}
    For notational conciseness, define
    \begin{equation*}
        \Gt \coloneqq \GradOp \mathcal L\pare{\Ut, \xit}, \quad \Zt \coloneqq \Gt - \GradOp \mathcal L(\Ut), \qquad \St \coloneqq \GradOp \mathcal L\pare{\Utm} - \GradOp \mathcal L(\Ut).
    \end{equation*}
    Then we can unroll $\Mt \coloneqq \beta \Mtm + (1-\beta) \Gt$ into
    \begin{equation*}
        \Et = 
            \beta^{\ii - 1} \iterationE \fin 
            + (1-\beta) \sum_{\tau = 2}^{\ii} \beta^{\ii - \tau} \pare{\iterationG \tau - \GradOp \mathcal L(\iterationU \tau)}
            + \sum_{\tau = 2}^\ii \beta^{\ii - \tau + 1} \iterationS \tau.
    \end{equation*}
    Next, Assumption~\ref{assum:smooth} yields $\nucnorm{\St} \leq \ssi L_g \opnorm{\Otm} \leq \ssi L$ and thus
    \begin{equation*}
        \nucnorm{\sum_{\tau = 2}^\ii \beta^{\ii - \tau + 1} \iterationS \tau}
        \leq \sum_{\tau = 2}^\ii \beta^{\ii - \tau + 1} \nucnorm{\iterationS \tau}
        \leq \frac{\ssi L}{1-\beta}.
    \end{equation*}
    Next note that $\iterationG \tau - \GradOp \mathcal L(\iterationU \tau)$ is a martingale difference sequence and thus \citep{MomentumImprovesNormalized2020cutkosky} implies
    \begin{align*}
        \Exp{\nucnorm{\sum_{\tau = 2}^{\ii} (1-\beta)\beta^{\ii - \tau} \pare{\iterationG \tau - \GradOp \mathcal L(\iterationU \tau)}}}
        &\leq \sqrt{r} \Exp{\frobnorm{\sum_{\tau = 2}^{\ii} (1-\beta)\beta^{\ii - \tau} \pare{\iterationG \tau - \GradOp \mathcal L(\iterationU \tau)}}}\\
        &\leq \sqrt{r} (1-\beta) \sqrt{\sum_{\tau = 1}^{\ii} \beta^{2(\ii - \tau)} \sigma^2}\\
        &\leq \sqrt{r} \sigma \sqrt{1-\beta}.
    \end{align*}
    Combining the above with our initialization $\iterationM \fin = \iterationG \fin$ yields
    \begin{equation*}
        \iSum \Exp{\nucnorm{\Et}}
        \leq \frac{\sqrt r\sigma}{1-\beta} + \sqrt r \sigma T \sqrt{1-\beta} + \frac{\ssi L T}{1-\beta}
    \end{equation*}
    and thus the claim.
\end{proof}

Finally we provide the proof for Theorem~\ref{thm:conv}.

\begin{proof}[Proof of Theorem~\ref{thm:conv}]
    The proof follows the same steps as \muown's proof \citep[Appendix B.2]{lion_muown_2026}. By Lemma~\ref{lem:descent_lemma} we have
    \begin{align*}
        \iSum \Exp{\dnorm{\GradOp \mathcal L(\Ut)}}
        &\leq \frac {\Dz} \ssi + \frac{\ssi L T}{2} + 2 \iSum \Exp{\dnorm{\Mt - \GradOp \mathcal L(\Ut)}}\\
        &\leq \frac {\Dz} \ssi + 3 \frac{\ssi L T}{1-\beta} + 2 \sigma \sqrt r \pare{\frac 1 {1-\beta} + T \sqrt{1-\beta}},
    \end{align*}
    where we used Lemma~\ref{lem:deviation_bound} in the second step. Dividing by $T$ and our choice of stepsize implies
    \begin{equation*}
        \frac 1 T \iSum \Exp{\dnorm{\GradOp \mathcal L(\Ut)}}
        \leq 4 \sqrt{\frac{\Dz L}{T (1-\beta)}} + \frac{2 \sigma \sqrt r }{T(1-\beta)} + 2 \sigma \sqrt r \sqrt{1-\beta}.
    \end{equation*}
    Our choice of momentum guarantees
    \begin{align*}
        \sqrt{\frac{\Dz L}{T (1-\beta)}} 
        &\leq \sqrt{\frac{\Dz L}{T}} + \pare{\frac{r \Dz L \sigma^2}{T}}^{\nicefrac 1 4}\\
        \sigma \sqrt r \sqrt{1-\beta} 
        &\leq \frac{\sqrt r \sigma}{T^{\nicefrac 1 3}} + \pare{\frac{r \Dz L \sigma^2}{T}}^{\nicefrac 1 4}\\
        \frac{\sigma \sqrt r}{T (1-\beta)}
        &\leq \frac{\sqrt r \sigma}{T^{\nicefrac 1 3}},
    \end{align*}
    and thus
    \begin{equation*}
        \frac 1 T \iSum \Exp{\dnorm{\GradOp \mathcal L(\Ut)}}
        \leq 4 \sqrt{\frac{\Dz L}{T}} + 7 \pare{\frac{r \Dz L \sigma^2}{T}}^{\nicefrac 1 4} + 4 \frac{\sqrt r \sigma}{T^{\nicefrac 1 3}}.
    \end{equation*}
    This proves a stronger statement, but for ease of presentation we use Lemma~\ref{lem:technical} to convert $\dnorm \cdot$ to the Frobenius norm, which finishes the proof.
\end{proof}

\end{document}